\titlespacing{\subsection}{0pt}{*0.6}{*0.6}
\DeclarePairedDelimiter\floor{\lfloor}{\rfloor}
\newcommand{\parmschar}{\ensuremath{w}}
\newcommand{\pillarparmschar}{\ensuremath{\mathfrak p}}
\newcommand{\pillarparms}{\ensuremath{{\parmschar_{\pillarparmschar}}}}
\newcommand{\pillarfunc}{\ensuremath{\phi}}
\newcommand{\pillarfuncparms}{\ensuremath{{\pillarfunc_{\pillarparms}}}}
\newcommand{\roofrparmschar}{\ensuremath{\mathfrak r}}
\newcommand{\roofparms}{\ensuremath{{\parmschar_{\roofrparmschar}}}}
\newcommand{\rooffunc}{\ensuremath{\psi}}
\newcommand{\rooffuncparms}{\ensuremath{{\rooffunc_{\roofparms}}}}
\newcommand{\aggfunc}{\ensuremath{\oplus}}
\newcommand{\reals}{\ensuremath{\mathbb{R}}}
\newcommand{\embeddingdim}{\ensuremath{d}}
\newcommand{\targetsubscript}{\ensuremath{\mathfrak t}}
\newcommand{\oraclescore}[2]{\ensuremath{\text{\emph{Score}}({#1},{#2})}}
\newcommand{\oracled}{{\ensuremath{\Delta}}}
\newcommand{\shortoraclescore}[2]{{\ensuremath{\oracled({#1}|{#2})}}}
\newcommand{\learnerd}{{\ensuremath{\delta}}}
\newcommand{\shortlearnerscore}[3]{{\ensuremath{\learnerd_{{#1}}({#2}|{#3})}}}
\DeclareMathOperator{\sgn}{sgn}
\author{%
  Gantavya Bhatt\thanks{Equal Contribution} \hspace{20pt} Arnav M. Das$^*$ \hspace{20pt} Jeffrey A. Bilmes \\
  University of Washington, Seattle, WA 98195 \\
  \texttt{\{gbhatt2, arnavmd2, bilmes\}@uw.edu} \\
}
\newtheorem{theorem}{Theorem}
\newtheorem{definition}[theorem]{Definition}
\newtheorem{lemma}[theorem]{Lemma}
\newtheorem{proposition}[theorem]{Proposition}
\begin{document}
\title{Deep Submodular Peripteral Networks}

\maketitle

\begin{abstract}
  Submodular functions, crucial for various applications, often lack
  practical learning methods for their acquisition.  Seemingly unrelated,
  learning a scaling from oracles offering graded pairwise preferences (GPC) is
  underexplored, despite a rich history in psychometrics. In this paper, we introduce
  deep submodular peripteral networks (DSPNs), a novel parametric
  family of submodular functions, and methods for their training using a
  GPC-based strategy to connect and then tackle both of the above challenges.
  We introduce newly devised GPC-style ``peripteral'' loss which
  leverages numerically graded relationships between pairs of objects
  (sets in our case). Unlike
  traditional
  contrastive learning, or RHLF preference ranking, our
  method utilizes graded comparisons, extracting more nuanced
  information than just binary-outcome comparisons, and contrasts sets of any size (not just two). We also
  define a novel suite of automatic sampling strategies for training,
  including active-learning inspired submodular feedback.  We
  demonstrate DSPNs' efficacy in learning submodularity from a costly
  target submodular function and demonstrate its superiority both for experimental design and online streaming applications.
\end{abstract}

\section{Introduction and Background}
\label{sec:Introduction}

This paper jointly addresses two seemingly disparate problems presently open
in the machine learning community.\looseness-1

The first is that of identifying a useful practical scalable
submodular function that can be used for real-world data science
tasks. Submodular functions, set functions that exhibit a diminishing returns property (see Appendix~\ref{sec:subm-funct-weight}), have received considerable attention in
the field of machine learning. This has fostered new algorithms
that offer near-optimal solutions for various applications.  These
applications include detecting disease
outbreaks~\citep{leskovec2007cost}, modeling fine structure in computer
vision~\citep{jegelka2011submodularity}, summarizing
images~\citep{Tschiatschek2014Learning,mitrovic2018data,feldman2018less,harshaw2022power},
active
learning~\citep{guillory2011online,golovin2010adaptive,esfandiari2021adaptivity},
compressed sensing, structured convex norms, and
sparsity~\citep{bach2010,elenberg2018restricted},
fairness~\citep{celis2016fair,kazemi2021regularized}, efficient model
training~\citep{lin2009select,lb10,mirzasoleiman2016distributed,mirzasoleiman2020coresets}
recommendation systems~\citep{mitrovic2019adaptive}, causal
structure~\citep{zhou2016causal,sussex2021near}, and brain
parcellating~\citep{salehi2017submodular} (see also the
reviews~\citep{tohidi2020submodularity,bilmes2022submodularity}). Despite
these myriad applications, most research on submodularity has been on
the algorithmic and theoretical side which assumes the submodular
function needing to be optimized is already at hand. While there has
been work showing the theoretical challenges associated with learning
submodularity~\citep{balcan2011learning,balcan2016learning,balcan2018submodular},
there is relatively little work on practical methods to produce a
useful submodular function in the first place (a few exceptions
include~\citep{lin2012learning,sipos2011large,tschiatschek2018differentiable}).
Often, because it works well, one uses the non-parametric submodular
facility location (FL) function, i.e., for $A \subseteq V$ with
$|V|=n$, $f(A) = \sum_{i=1}^n \max_{a \in A} s_{a,i}$ where
$s_{a,i} \geq 0$ is a similarity between items $a$ and $i$. On the
other hand, the FL function's computational and memory costs grow
quadratically with dataset size $n$ and so FL can become infeasible for
large data or online streaming applications. There is a need for practical strategies to obtain useful,
scalable, general-purpose, and widely applicable submodular functions.\looseness-1

The second problem we address in this paper is the following: how best,
in a modern machine-learning context, can one learn a scaling from 
oracles that, for a given query, each provide numerically graded
pairwise comparisons between two choices? That is, given two choices,
$A$ and $B$, an oracle provides a score
$\text{\emph{Score}}(A,B) \in \mathbb R$ which is positive if $A$ is
preferred to $B$, negative if $B$ is preferred to $A$, zero if
indifferent, and where the absolute value provides the degree of
preference $A$ or $B$ has over the other.  The oracle could be an
individual human annotator, or could be a combinatorially expensive
desired target function --- in either case we assume that it is infeasible
to optimize over the oracle but it is feasible to query. Learning a ``scaling''
means learning a scalar-valued function $f$ that respects these graded pairwise
scores in that $f(A) - f(B) \approx \text{\emph{Score}}(A,B)$.

A special case of such preference elicitation from human oracles have
been studied going back many decades. For example, the psychometric
``Law of comparative judgment"~\citep{thurstone1927law} focuses on
establishing numeric interval scales based on knowing preferences
between pairs of choices (e.g., $A$ vs $B$).  The
Bradley-Terry~\citep{bradley1952rank} model (generalized to ranking
elicitations in the Luce-Shephard
model~\citep{luce1959individual,shepard1957stimulus}) also involves
preferences among elements of pairs. These preferences, however, are
not graded and rather are either only binary ($A \prec B$ or
$B \prec A$), ternary (allowing also for indifference), or quaternary
(allowing also for incomparability). Thus, each oracle query provides
minimal information about a pair. Therefore, multiple preference
ratings are often aggregated where a collection of (presumed
random i.i.d.) oracles vote on the preference between items of the
same pair. This produces a histogram of preference ratings $C_{ij}$ (a
count of how many times $i$ is preferred to
$j$~\citep{silverstein2001efficient}) and this can be used in modern
reinforcement learning with human feedback
(RLHF) systems~\citep{wirth2017survey,christiano2023deep,ouyang2022training,zhan2023provable,wang2023rlhf}
for reward learning (i.e., inverse reinforcement
learning~\citep{arora2021survey,adams2022survey}).
Overall, the
above modeling strategies are known in the field of psychometric
theory~\citep{guilford1954psychometric} as ``paired comparisons'' where
it is said that ``paired comparison methods generally give much more
reliable results''~\citep{nunnally1994psychometric} for models of human
preference expression and elicitation.

There are other models of preference elicitation besides paired
comparisons.  We explore ``graded
pairwise comparisons'' (GPC) which also have been studied for quite
some
time~\citep{scheffe1952pairedcomparisons,bechtel1967analysis,bechtel1971covariance}.
Quite recently, it was found that ``GPCs are expected to reduce faking
compared with Likert-type scales and to produce more reliable, less
ipsative
trait scores than traditional-binary
forced-choice formats''~\citep{Lingel2022} such as paired comparisons.
An intuitive reason for this is that, once we get to the point that an
oracle is asked only if $A \prec B$ or $B \prec A$, this does not
elicit as much information out of the oracle as would asking for
$\oraclescore{A}{B}$ even though the incremental oracle effort for the latter
is often negligible. As far as we know, despite GPC's potential
information extraction efficiency advantages, GPC has not been
addressed in the modern machine learning community. Also, while
histograms $C_{i,j}$ are appropriate for maximum likelihood estimation
(MLE) of non-linear logistic and softmax style
regression~\citep{christiano2023deep,ouyang2022training,zhu2023principled,zhan2023provable},
MLE with a Bradley-Terry style model is
suboptimal to learn scoring from graded preferences
$\oraclescore{A}{B} \in \mathbb R$ which can be an arbitrary signed
real number.

In the present paper we simultaneously address both of the above
concerns via the introduction and training of {\bf deep submodular
  peripteral networks} (DSPNs). In a nutshell, DSPNs are an expressive
theoretically interesting parametric family of submodular functions
that can be successfully learnt using a new GPC-style loss that we
introduce below. In our work, since the teacher will be an expensive
FL function and the learner is a DSPN set function, we can view this
as a form of knowledge distillation. Our offering,
therefore, is very far from incremental --- rather, we introduce a new provably
more expressive model family (DSPNs), and a new learning paradigm
(GPC-style learning) for distilling an expensive FL set function down to an
expressive parametric
learner.\looseness-1




An outline of our paper follows.
In \S\ref{sec:dspns} we define the class of DSPN functions, motivating the name ``peripteral'',
and offering theoretical comparisons with the class of deep sets~\citep{deepset}.
Next, \S\ref{sec:peripteral-loss} describes a new GPC-style
``peripteral'' loss that is appropriate to train a DSPN but can be used
in any GPC-based preference elicitation learning task.  For DSPN
learning, this happens by producing a list of pairs of subsets $E,M$
that can be used to transfer information from the oracle target to the
DSPN being learnt. \S\ref{sec:what-subsets-train} describes several
$E,M$ pair sampling strategy including an active-learning inspired
submodular feedback approach.  \S\ref{sec:Experiments} empirically
evaluates DSPN learning based on how effectively information is transferred from the 
target oracle and performance on downstream tasks such as experimental design. We 
find that the peripteral loss outperforms other losses for training DSPNs, and show that
the DSPN architecture is more effective at learning from the target function than baseline methods
such as Deep Sets and SetTransformers. We also find that the grading in GPC indeed improves performance over binary-only comparisons. 



While the above introduction situates our research and establishes 
the setting of our paper in the context of previous related work, our
relation to and advancement over {\bf other related work} is fully explored in our Appendix in \S\ref{sec:other-related-work}. The appendices
continue (starting at \S\ref{sec:subm-funct-weight}) offering further
details of DSPN novelty (\S\ref{sec:prim-deep-subm}) and learning via the GPC-style peripteral loss.

\section{Deep Submodular Peripteral Networks}
\label{sec:dspns}

Before starting this section, it may assist the reader to
consider our brief overview of submodularity, matroids, and weighted matroid rank in \S\ref{sec:subm-funct-weight}, and a brief review
of deep submodular functions in \S\ref{sec:prim-deep-subm}, both provided in the appendix.

A deep submodular peripteral network (DSPN) is a new expressive trainable
parametric nonsmooth subdifferentiable submodular function. A DSPN 
has three stages: (1) a "pillar" stage; (2) a submodular-preserving
permutation-invariant aggregation stage; and (3) a "roof" stage. When
considered together, these three stages resemble an ancient Greek or
Roman ``peripteral'' temple as shown in
Figures~\ref{fig:overall_flow_main} and~\ref{fig:peripteral_neural_temple}. In this paper, all
DSPNs are also monotone non-decreasing and normalized (see
\S\ref{sec:subm-funct-weight}). As a set function
$f: 2^V \to \mathbb R_+$, a DSPN maps any subset $A \subseteq V$ to a
non-negative real number. We consider sets as object \underline{indices} so $A$
might be a set of image indices while $\{ x_i : i \in A\}$ are the
objects being referred to. Hence, w.l.o.g.,
$V=[n]$.

The first stage of a DSPN consists of a set of $|A|$ pillars one for
each element in a set $A$ that is being evaluated. The number of pillars changes depending on $|A|$ the size of the set $A$ being evaluated. Each pillar
is a many-layered deep neural structure
$\pillarfuncparms : \mathcal D_X \to \reals_+^\embeddingdim$,
parameterized by the vector $\pillarparms$ of real values,
that maps from an object (from domain $\mathcal D_X$ which could be an image,
string, etc.) to a non-negative embedded representation of that object. 
For any $v \in V$ and object $x_v$, then $\pillarfuncparms(x_v)$ is
a $\embeddingdim$-dimensional non-negative real-valued vector
$(\pillarfuncparms(x_v)_1,\pillarfuncparms(x_v)_2,\dots,\pillarfuncparms(x_v)_\embeddingdim) \in \reals_+^\embeddingdim$. The parameters $\pillarparms$ are shared for all objects.
Also,  $\pillarfuncparms(x_:)_j \in \reals_+^n$ refers, for any $j \in [\embeddingdim]$,
to the $n$-dimensional vector of the $j^\text{th}$ pillar outputs for
all $v \in V$.

The second stage of a DSPN is a submodular preserving and
permutation-invariant aggregation. Before describing this generally,
we start with a simple example. Each element $\pillarfuncparms(x_a)_j$
of $\pillarfuncparms(x_a)$ is a score for $x_a$ along dimension $j$
and so a non-negative modular set function can be constructed via
$m_j(A) = \sum_{a \in A} \pillarfuncparms(x_a)_j$. Such a summation is
an aggregator that, along dimension $j$ simply sums up the
$j^\text{th}$ contribution for every object in $A$. A simple way to
convert this to a monotone non-decreasing submodular function composes
it with a non-decreasing concave function $\psi$ yielding
$h_j(A) = \psi(m_j(A))$ --- performing this for all $j$ yields a
$d$-dimensional vector of submodular functions. More generally, any
aggregation function $\aggfunc$ that preserves the submodularity of
$h_j(A) = \psi(\aggfunc_{a \in A} m_j(a))$ would suffice so long as it
is also permutation invariant~\citep{deepset} (see
Def.~\ref{sec:subm-funct-weight-1}).  For example, with
$\aggfunc = \max$ (i.e., max pooling), the function
$\psi(\max_{a \in A} m_j(a))$ is again submodular.  We show that there
is an expressive infinitely large family of submodular preserving 
permutation-invariant aggregation operators,
taking the form of the weighted matroid rank functions
(Def.~\ref{def:weighted_matroid_rank}) which are defined based on a
matroid $\mathcal M=(V,\mathcal I)$ and a non-negative vector
$m \in \reals_+^{|V|}$.
\begin{restatable}[Permutation Invariance of Weighted Matroid Rank]{lemma}{wmatroidrankperminvariant}
\label{lemma:permutation_invariance}
The weighted matroid rank function $\text{rank}_{\mathcal M, m}(\cdot)$ for matroid $\mathcal M = (V,\mathcal I)$ with
any non-negative vector $m \in \reals_+^{|V|}$ is permutation invariant.\looseness-1
\end{restatable}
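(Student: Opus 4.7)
The plan is straightforward and amounts to inspecting the definition of the weighted matroid rank. First, I would recall (per Definition~\ref{def:weighted_matroid_rank}) that for a matroid $\mathcal{M} = (V, \mathcal{I})$ and a non-negative weight vector $m \in \reals_+^{|V|}$, the weighted rank function takes the form $r_m(A) = \max\{\sum_{i \in I} m_i : I \in \mathcal{I},\, I \subseteq A\}$.

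Next, I would observe that every ingredient of this formula is intrinsically set-theoretic. The independence condition $I \in \mathcal{I}$ treats $I$ as a set, since the matroid axioms are phrased on sets. The containment $I \subseteq A$ is a set relation. And the objective $\sum_{i \in I} m_i$ is a sum over the elements of a set, which is commutative and hence independent of any enumeration. Therefore $r_m(A)$ depends on $A$ only through its underlying set, not through any ordering of its elements. To match the Deep Sets notion of permutation invariance formally (Definition~\ref{sec:subm-funct-weight-1}), I would then enumerate the elements of $A$ as a tuple $(a_1, \dots, a_{|A|})$ and consider any permutation $\pi$ of the indices; since $\{a_1, \dots, a_{|A|}\} = \{a_{\pi(1)}, \dots, a_{\pi(|A|)}\}$ as sets, $r_m$ evaluates to the same value on either tuple, which is precisely permutation invariance.

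The ``main obstacle'' here is really conceptual rather than technical: one must simply accept that matroid rank is defined on sets and that the maximization problem is phrased entirely with set-theoretic objects, after which the conclusion is immediate. No combinatorial machinery beyond the matroid definition is required. The substantive content of the broader picture---namely that weighted matroid rank actually \emph{preserves submodularity} when used as an aggregator on the pillar outputs---is a separate and more interesting matter addressed elsewhere in the paper.
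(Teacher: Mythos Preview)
Your proposal is correct and matches the paper's own argument essentially line for line: both simply unfold the definition of the weighted matroid rank, observe that it involves only set membership, set containment, and a commutative sum, and conclude that no ordering of the elements of $A$ enters. The paper additionally remarks that the result follows immediately from the more general Proposition~\ref{sec:subm-funct-perm-invariant} (any submodular set function is permutation invariant), which you allude to implicitly when noting that the interesting content lies elsewhere.
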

A proof is in Appendix~\ref{sec:subm-funct-weight}. We
identify the aggregator
$\aggfunc$ as $\aggfunc_{a \in A} m_j(a) = \text{rank}_{\mathcal M,m_j}(A)$ and produce
new submodular functions $\forall j$, 
$h_j^\pillarparms(A) = \psi(\text{rank}_{\mathcal M,\pillarfuncparms(x_{:})_j}(A))$, where the $n$-dimensional vector $\pillarfuncparms(x_{:})_j$ constitute the weights
for the matroid.
Depending
on the matroid, this yields summation (using a uniform matroid),
max-pooling (using a partition matroid), and an unbounded number of
others thanks to the flexibility of matroids~\citep{oxley2011matroid}.
In theory, this means that the aggregator could also be trained
via discrete optimization over the space of discrete matroid structures
(in \S\ref{sec:Experiments}, however, our aggregators are fixed).
The second stage of the DSPN thus aggregates the modular outputs of the
first stage into a resulting $\embeddingdim$-dimensional
vector for any set $A \subseteq V$,
namely $h_\pillarparms(A) = (h_1^\pillarparms(A), h_2^\pillarparms(A), \dots, h_\embeddingdim^\pillarparms(A)) \in \reals_+^\embeddingdim$. This can also be viewed as a vector of submodular functions. We presume in this
work that each aggregator uses the same matroid and each pillar dimension is
used with exactly one matroid, but this can be relaxed leading
to a $\embeddingdim' > \embeddingdim$ dimensional space --- in other words, many different discrete matroid structures can be used at the same time if so desired.

The third and final ``roof'' stage of a DSPN is a deep submodular function
(DSF)~\citep{dolhansky2016deep}. Deep submodular functions are a nested
series of vectors of monotone non-decreasing concave functions
composed with each other, layer-after-layer, leading to a final single
scalar output. Each layer contained in a DSF has non-negative weight
values which assures the DSF is submodular.  It was shown
in~\citep{bilmes2017deep} that the family of submodular functions
expressible by DSFs strictly increase with depth. DSFs are also
trainable using gradient-based methods analogous to how any deep
neural network can be trained. A DSF is defined as a set function, but
we utilize the ``root''\footnote{"root" here is distinct from "roof".} \citep{bilmes2017deep} 
 multivariate multi-layered concave function
$\rooffuncparms : \reals^\embeddingdim \to \mathbb R_+$ of a DSF in
this work. That DSPNs significantly extend DSFs, as well as a DSF primer, is
further described in \S\ref{sec:prim-deep-subm}.
The final stage of the DSPN composes the output of the
second stage with this final DSF via
$f_\parmschar(A) = \rooffuncparms(h_\pillarparms(A))$
where $\parmschar=(\pillarparms,\roofparms)$ constitute all the learnable continuous
parameters of the DSPN (assuming the discrete parameters, such as the matroid structure, is fixed). While the roof parameters $\roofparms$
must be positive, the pillar parameters $\pillarparms$
are free, but the pillar must produce non-negative outputs.
\begin{restatable}[A DSPN is monotone non-decreasing submodular]{theorem}{dspnsubmodular}
\label{theorem:dspn_submodular}
Any DSPN $f_\parmschar(A) = \rooffuncparms(h_\pillarparms(A))$ so
defined is guaranteed to be monotone non-decreasing submodular
for all valid values of $w$.
\end{restatable}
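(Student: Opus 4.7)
The plan is to prove monotone non-decreasing submodularity by propagating these two properties through the three stages of the DSPN, leveraging the fact that the relevant closure rules (non-negative linear combination, composition with non-decreasing concave functions) preserve both monotonicity and submodularity simultaneously.

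First I would handle the pillar plus aggregation stage. For a fixed coordinate $j \in [\embeddingdim]$, the weights $m_j(v) = \pillarfuncparms(x_v)_j$ are non-negative by the stipulation that the pillar outputs lie in $\reals_+^\embeddingdim$. A standard result in matroid theory says that for any matroid $\mathcal M = (V,\mathcal I)$ and any non-negative weight vector $m_j$, the weighted matroid rank function $A \mapsto \text{rank}_{\mathcal M, m_j}(A)$ is monotone non-decreasing submodular; Lemma~\ref{lemma:permutation_invariance} additionally guarantees it is permutation-invariant so it is a well-defined set function. Composing with the non-decreasing concave scalar function $\psi$ then preserves monotonicity and submodularity, since if $g$ is monotone non-decreasing submodular then so is $\psi \circ g$. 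Hence each component $h_j^\pillarparms(A) = \psi(\text{rank}_{\mathcal M, \pillarfuncparms(x_{:})_j}(A))$ is monotone non-decreasing submodular.

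Second I would push these properties through the DSF roof. The root function $\rooffuncparms$ is, by construction, a multi-layer composition in which each layer takes a non-negative weighted sum of its inputs and then applies a non-decreasing concave scalar activation coordinate-wise. I would argue by induction on the layer depth: the inductive hypothesis is that the vector of functions $A \mapsto \bigl(g_1^{(\ell)}(A), \ldots, g_{d_\ell}^{(\ell)}(A)\bigr)$ produced at layer $\ell$ consists of monotone non-decreasing submodular set functions. Passing from layer $\ell$ to $\ell+1$ first forms coordinates of the form $\sum_i \alpha_i g_i^{(\ell)}(A)$ with $\alpha_i \geq 0$, and non-negative linear combinations of monotone non-decreasing submodular functions are again monotone non-decreasing submodular; then the layer applies a non-decreasing concave scalar activation to each coordinate, which by the composition rule from the previous paragraph again preserves both properties. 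The base case is exactly the output of stage two that I already established. Unwinding the induction at the final scalar layer gives the claim for $f_\parmschar$.

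The main obstacle is not any individual step but rather being careful about the precise form of the DSF ``root'' function and citing the composition closure properties crisply. In particular, one should spell out that the DSF closure argument used here is the same layerwise composition lemma appearing in \citep{dolhansky2016deep,bilmes2017deep}, and that the non-negativity constraint on $\roofparms$ is exactly what is needed so that each layer's linear combination has non-negative coefficients. I would also note explicitly that the hypothesis ``the pillar must produce non-negative outputs'' is essential: without it, the weighted matroid rank function is not guaranteed to be monotone submodular, and this is the one place where a constraint on $\pillarparms$ implicitly enters even though $\pillarparms$ is otherwise unrestricted. With these observations in place, the proof is a straightforward concatenation of standard closure results.
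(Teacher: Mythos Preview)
Your proposal is correct and follows the same overall scheme as the paper: establish that the stage-two aggregated outputs $h_j^\pillarparms(A)$ are polymatroid (monotone non-decreasing submodular) using the fact that weighted matroid rank functions are polymatroid for non-negative weights, and then propagate these properties through the DSF roof.

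The one notable difference is how the roof step is dispatched. The paper handles it in a single stroke by invoking Theorem~5.12 of \citet{bilmes2017deep} (restated in the appendix as Lemma~\ref{theorem:submodularity_of_antitone_composition_on_polymatroid}): composing a vector of polymatroid functions with a multivariate monotone non-decreasing concave function possessing an \emph{antitone superdifferential} yields a submodular function, and Lemma~\ref{lemma:antitone superdifferential} verifies that the DSF root $\rooffuncparms$ has this property. Your layer-by-layer induction sidesteps the antitone-superdifferential machinery entirely, since each DSF layer factors as a non-negative linear combination (which trivially preserves monotone submodularity) followed by a coordinate-wise scalar concave activation (for which the elementary univariate composition lemma suffices). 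Both arguments are valid; yours is more self-contained for the specific layered architecture here, while the paper's cited result buys generality for roof functions that are concave in a genuinely multivariate way and do not decompose so cleanly layer by layer.
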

The proof of Theorem~\ref{theorem:dspn_submodular} is found in Appendix \S\ref{sec:weight-matr-rank}. We immediately have the following corollary.
\begin{restatable}[Submodular Preservation of Weighted Matroid Rank Aggregators]{corollary}{wmatroidranksubmodularpreservation}
\label{corollary:submodular_preservation}
Any weighted matroid rank function for matroid $\mathcal M = (V,\mathcal I)$ with
any non-negative vector $m \in \reals_+^{|V|}$, when used as an aggregator in a DSPN, will be submodularity preserving.
\end{restatable}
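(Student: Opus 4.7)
The plan is to derive this corollary essentially as a byproduct of Theorem~\ref{theorem:dspn_submodular}. Since that theorem already guarantees that the full composed DSPN $f_\parmschar(A) = \rooffuncparms(h_\pillarparms(A))$ is monotone non-decreasing submodular when the aggregator is a weighted matroid rank function, the intermediate claim --- that each coordinate $h_j^\pillarparms$ produced in the second stage is submodular --- falls out naturally. Indeed, verifying that the aggregator preserves submodularity is one of the required ingredients en route to the theorem, so the corollary simply isolates and names that ingredient.

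Concretely, I would argue in three short steps. First, invoke the classical matroid-theoretic fact that for a matroid $\mathcal M = (V, \mathcal I)$ and any non-negative weight vector $m \in \reals_+^{|V|}$, the weighted matroid rank function $\text{rank}_{\mathcal M, m}(A) = \max\{\sum_{a \in I} m(a) : I \subseteq A,\, I \in \mathcal I\}$ is a non-negative, monotone non-decreasing, submodular set function. Combined with Lemma~\ref{lemma:permutation_invariance}, the aggregator already has the two structural properties demanded by the DSPN design: permutation invariance and submodularity. Second, recall that within the DSPN the aggregator output is composed with a non-decreasing concave function $\psi$ (part of the roof/DSF stage), yielding $h_j^\pillarparms(A) = \psi\bigl(\text{rank}_{\mathcal M, \pillarfuncparms(x_{:})_j}(A)\bigr)$. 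The pillar outputs $\pillarfuncparms(x_{:})_j$ are non-negative by construction, so the rank function is a non-negative monotone submodular set function. Third, apply the standard closure property that a non-decreasing concave function of a non-negative monotone submodular function is itself monotone submodular; hence $h_j^\pillarparms$ is submodular for every $j$, which is precisely what it means for the weighted matroid rank aggregator to be submodularity preserving in the DSPN.

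There is essentially no obstacle here: the only care needed is in citing the right hypotheses of the concave-of-submodular closure (non-negativity of the inner function and non-decreasing concavity of $\psi$), both of which are enforced by the DSPN construction. Since Theorem~\ref{theorem:dspn_submodular} presumably proves these very facts en route to its stronger conclusion, the corollary can simply appeal to that theorem (or its proof) rather than redo the work.
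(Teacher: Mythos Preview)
Your proposal is correct and takes essentially the same approach as the paper: the paper's proof simply states that the corollary follows immediately from Theorem~\ref{theorem:dspn_submodular}, which is exactly what you do (with some additional unpacking of why that implication holds).
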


Clearly
there is a strong relationship between DSPNs and Deep
Sets~\citep{deepset}.  In fact any DSPNs is a special case of a Deep
Set, but there is an important distinction, namely that a DSPN is
assuredly submodular. It may seem like this would restrict the family
of aggregation functions available but, as shown above, there are in
theory an unbounded number of aggregators to study. Training a DSPN
also preserves submodularity simply by ensuring $\roofparms$ remains
positive which can be achieved using projected gradient descent (the
positive orthant is one of the easiest of the constraints to
maintain). A deep set, when trained, can easily lose the submodular
property, something we demonstrate in \S\ref{sec:Experiments},
rendering the deep set inapplicable when submodularity is
required. Also, as was shown in~\citep{bilmes2017deep}, DSFs
are already quite expressive, the only known limitation being their
inability to express certain matroid rank functions. With a DSPN,
however, the aggregator functions immediately eliminate this
inability.
\begin{restatable}[DSPN Family]{corollary}{dspn_family}
  \label{corollary:dspn_family}
  Let $\mathcal F_\text{DSPN}$ (resp.\ $\mathcal F_\text{DSF}$) be the family of DSPN (resp. DSF) submodular functions.
  Then $\mathcal F_\text{DSPN} \supset \mathcal F_\text{DSF}$.
\end{restatable}
It is an open theoretical problem if a DSPN, by
varying in parameter $w$ and combinatorial space $\mathcal M$, can
represent all possible monotone non-decreasing submodular functions.

\section{Peripteral Loss}
\label{sec:peripteral-loss}

We next describe a new ``peripteral'' loss to train a DSPN
and simultaneously address how to learn
from numerically graded pairwise comparisons (GPCs) queryable
from a target oracle.

We are given two objects $E$ and $M$ and we
assume access to an oracle that when queried provides a numerical
score $\oraclescore{E}{M} \in \reals$.
As shorthand, we define $\shortoraclescore{E}{M} = \oraclescore{E}{M}$.
We do not
presume it is always possible to optimize over
$\shortoraclescore{E}{M}$.  This score, as mentioned above, could
come from human annotators giving graded preferences for $E$ vs.\ $M$,
or could be an expensive process of training a machine learning system
separately on data subsets $E$ and $M$ and returning the difference in
accuracy on a development set.  In the context of DSPN learning, our
score comes from the difference of a computationally challenging target submodular
function $f_\targetsubscript$. As an example, $f_\targetsubscript$
could be a facility location function that requires $O(n^2)$ time and
memory and is not streaming friendly, where we have
$\shortoraclescore{E}{M} = f_\targetsubscript(E) -
f_\targetsubscript(M)$, this being a positive if $E$ is more diverse
than $M$ and negative otherwise. Given a collection of $E,M$
pairs (\S\ref{sec:what-subsets-train}), we wish 
to efficiently transfer knowledge from
the oracle target's graded preferences $\shortoraclescore{E}{M}$
to the learners graded preferences $\shortlearnerscore{f_\parmschar}{E}{M} = f_\parmschar(E) - f_\parmschar(M)$
by adjusting the parameters $\parmschar$
of the learner $f_\parmschar$. 

We are inspired by simple binary linear SVMs that
learn a binary label $y \in \{+1,-1 \}$ using a linear
model $\langle \theta, x \rangle$ for input $x$ and a hinge loss
$\text{\emph{hinge}}(z) = \max(1-z,0)$. We express the loss of a given
prediction for $x$ as $\text{\emph{hinge}}(y \langle \theta, x \rangle)$
where $y \langle \theta, x \rangle$ is positive for a correct prediction
and negative otherwise, regardless of the sign of $y$.

Addressing how to measure $\shortlearnerscore{f_\parmschar}{E}{M}$'s divergence
from $\shortoraclescore{E}{M}$, multiplying the two (as above) would have the same sign
benefit as the linear hinge SVM --- a positive product indicates the
preference for $E$ vs.\ $M$ is aligned between the oracle and
learner. We want more than this, however, since we wish to learn from
GPC. Instead, therefore, we measure mismatch via the ratio
$\shortlearnerscore{f_\parmschar}{E}{M}/\shortoraclescore{E}{M}$. Immediately, we again have
a positive quantity if preferences between oracle and target are aligned,
but here grading also matters.
A good match makes $\shortlearnerscore{f_\parmschar}{E}{M}/\shortoraclescore{E}{M}$
not only positive but also large. If $\shortoraclescore{E}{M}$ is large and positive,
$\shortlearnerscore{f_\parmschar}{E}{M}$ must also be large and positive to make the ratio large and positive,
while if $\shortoraclescore{E}{M}$ is small and positive,
$\shortlearnerscore{f_\parmschar}{E}{M}$ must also be positive
but need not be too large to make the ratio large.
A similar property holds when $\shortoraclescore{E}{M}$ is negative.
That is, if $\shortoraclescore{E}{M}$ is large and negative, then so would $\shortlearnerscore{f_\parmschar}{E}{M}$ need to be to make the ratio large and positive, while $\shortoraclescore{E}{M}$ being small and negative allows for $\shortlearnerscore{f_\parmschar}{E}{M}$ to be small and negative while ensuring the ratio is large and positive.

On the other hand, a ratio alone is not a good objective to maximize,
one big reason being that small changes in the denominator can have
disproportionately large effects on the value of the ratio, which can
amplify noise and lead to erratic optimization behavior.\footnote{In the sequel,
  we use $\oracled$ and $\learnerd$ where the $E,M$ pair is implicit.}
Ideally,
we could find a hinge-like loss that will penalize small ratios and
reward large ratios. Consider the following:
\begin{equation}
  \mathcal L_\oracled(\learnerd) = |\oracled| \log(1+\exp(1-\frac{\learnerd}{\oracled}))
  \label{eq:mother_peripteral_loss}
\end{equation}
Here, if $\oracled$ is (very) positive, then $\learnerd$ must also be
(very) positive to produce a small loss.  If $\oracled$ is (very)
negative then $\learnerd$ also must be small to produce a small loss.
The absolute value of $\oracled$ determines how non-smooth the ratio is
so we include an outer pre-multiplication by $|\oracled|$ to ensure
the slope of the loss, as a function of $\learnerd$, asymptotes to
negative one (or one when $\oracled$ is negative) in the penalty
region, analogous to hinge loss, and thus produces numerically stable
gradients. Also analogous to hinge, we add an extra ``1'' within the
$\exp()$ function as a form of margin confidence, analogous to hinge
loss that, as with SVMs, expresses a distance between the decision
boundary and the support vectors. Plots of
$\mathcal L_\oracled(\learnerd)$ are shown in
Fig.~\ref{fig:plot_peripteral_loss} in
Appendix~\ref{sec:appendix_peripteral_loss_discussion} --- this
appendix also discusses the smoothness of
$\mathcal L_\oracled(\learnerd)$ via its gradients, describes a few
additional useful hyperparameters, and provides a numerically stable
solution to the case when $\Delta = 0$ (thereby solving any
$\learnerd/0$ issues).

  Comparing with standard contrastive losses and deep metric
  learning~\citep{balestriero2023cookbook}, a peripteral loss on
  $\shortlearnerscore{f_\parmschar}{E}{M} = f_\parmschar(E) -
  f_\parmschar(M)$ can be seen as a high-order 
  self-normalizing by $\Delta$
  generalization of
  triplet loss~\citep{weinberger2009distance} if $E$ is a heterogeneous
  and $M$ a homogeneous set of objects. We recover aspects of a
  triplet loss if we set $E=\{v,v^-\}$ and $M=\{v,v^+\}$ where $M$ is
  the positive pair (close and homogeneous) while $E$ is the
  negative pair (distant from each other and thus a diverse
  pair). With $|E|>2$ and $|M|>2$, we naturally represent high
  order relationships amongst \textbf{both} the positive group $M$ and the
  disperse group $E$ (which is apparently rare, see\S\ref{sec:relat-contr-learn}). We remind the reader that while our peripteral
  loss could be used for GPC-style contrastive training for
  representation learning or for RLHF-based transformer alignment, our experiments in this paper in
  \S\ref{sec:Experiments} focus on the submodular learning problem.

\subsection{Augmented Loss for Augmented Data}
\label{sec:augm-loss-augm}

Data augmentation is frequently used to improve the generalization of
neural networks. Since augmented images represent the same knowledge
in principle, they should have the same submodular valuation and also
should be considered redundant by any learnt submodular function.
Given a set $E$ of images, let $E'$ represent a same-size set of
augmented images generated after augmenting each image $e \in E$.
This means any learnt model $f_\parmschar$ should have
$f_{\parmschar}(E) = f_{\parmschar}(E')$ and $\forall e \in E$,
$f_{\parmschar}(e) = f_{\parmschar}(e')$. To ensure
this is the case, we use an augmentation regularizer
$\mathcal{L}_{\text{aug}}(f_{\parmschar}; E, E')$ defined as follows:
\begin{equation}
\label{eq:aug_reg}
 \mathcal{L}_{\text{aug}}(f_{\parmschar}; E, E') = \lambda_1 (f_{\parmschar}(E) - f_{\parmschar}(E'))^2
    + \frac{\lambda_2}{|E|} \sum_{e \in E} (f_{\parmschar}(e) - f_{\parmschar}(e'))^2
\end{equation}
Since augmented images represent the same information, the submodular
gain of augmented images relative to non-augmented images
should be small or zero meaning $f_{\parmschar}(E|E') = f_{\parmschar}(E'|E) = 0$
and $\forall e \in E$, $f_{\parmschar}(e|e') = f_{\parmschar}(e'|e) = 0$.  To
encourage this, we use a redundancy regularizer
$\mathcal{L}_{\text{redn}}(f_{\parmschar}; E, E')$ defined as follows:
\begin{equation}
\label{eq:redn_reg}
\scalebox{1}{$  
  \mathcal{L}_{\text{redn}}(f_{\parmschar}; E, E') = \lambda_3 ( f_{\parmschar}(E|E')^2 + f_{\parmschar}(E'|E)^2 )
  +  \frac{\lambda_4}{|E|} \sum_{e \in E} (f_{\parmschar}(e|e')^2 + f_{\parmschar}(e'|e)^2)$}
\end{equation}
In the above, we square all quantities to provide an extra penalty
to larger values, while as quantities such as $f_{\parmschar}(E|E')$
approach zero, we diminish the penalty.

\subsection{Final Loss}
\label{sec:final-loss}

With all of the above individual loss components now well described, our final total loss 
$\mathcal{L}_{\text{tot}}(f_{\parmschar}; \shortoraclescore{E}{M},
\shortlearnerscore{f_\parmschar}{E}{M})$ for an $E,M$ pair becomes:
\begin{align}
\label{eq:finaL_loss}
\mathcal{L}_{\text{tot}}(f_{\parmschar}; \shortoraclescore{E}{M},
\shortlearnerscore{f_\parmschar}{E}{M}) &= \mathcal L_{\shortoraclescore{E}{M}}(\shortlearnerscore{f_\parmschar}{E}{M}) +\mathcal L_{\text{aug}}(f_{\parmschar}; E \cup M, E' \cup M') \notag \\
&+ \mathcal L_{\text{redn}}(f_{\parmschar}; E , E') +\mathcal L_{\text{redn}}(f_{\parmschar}; M , M')
\end{align}
With a dataset $\mathcal{D} = \{(E_i, M_i)\}_{i=1}^N$ of pairs, the total risk is:\looseness-1
\begin{equation}
  \label{eq:risk}
\hat{\mathcal{R}}(\parmschar; \mathcal{D}) = \frac{1}{N}
\sum_{i \in [N]}
\mathcal{L}_{\text{tot}}(f_{\parmschar}; \shortoraclescore{E_i}{M_i},
\shortlearnerscore{f_\parmschar}{E_i}{M_i}).
\end{equation}

Fig.~\ref{fig:overall_flow_main} shows the structure of a DSPN and its learning control flow. Please see~\cref{sec:peript-netw-train} for more details.

\begin{figure}[htp]
\centerline{\includegraphics[width=0.85\linewidth]{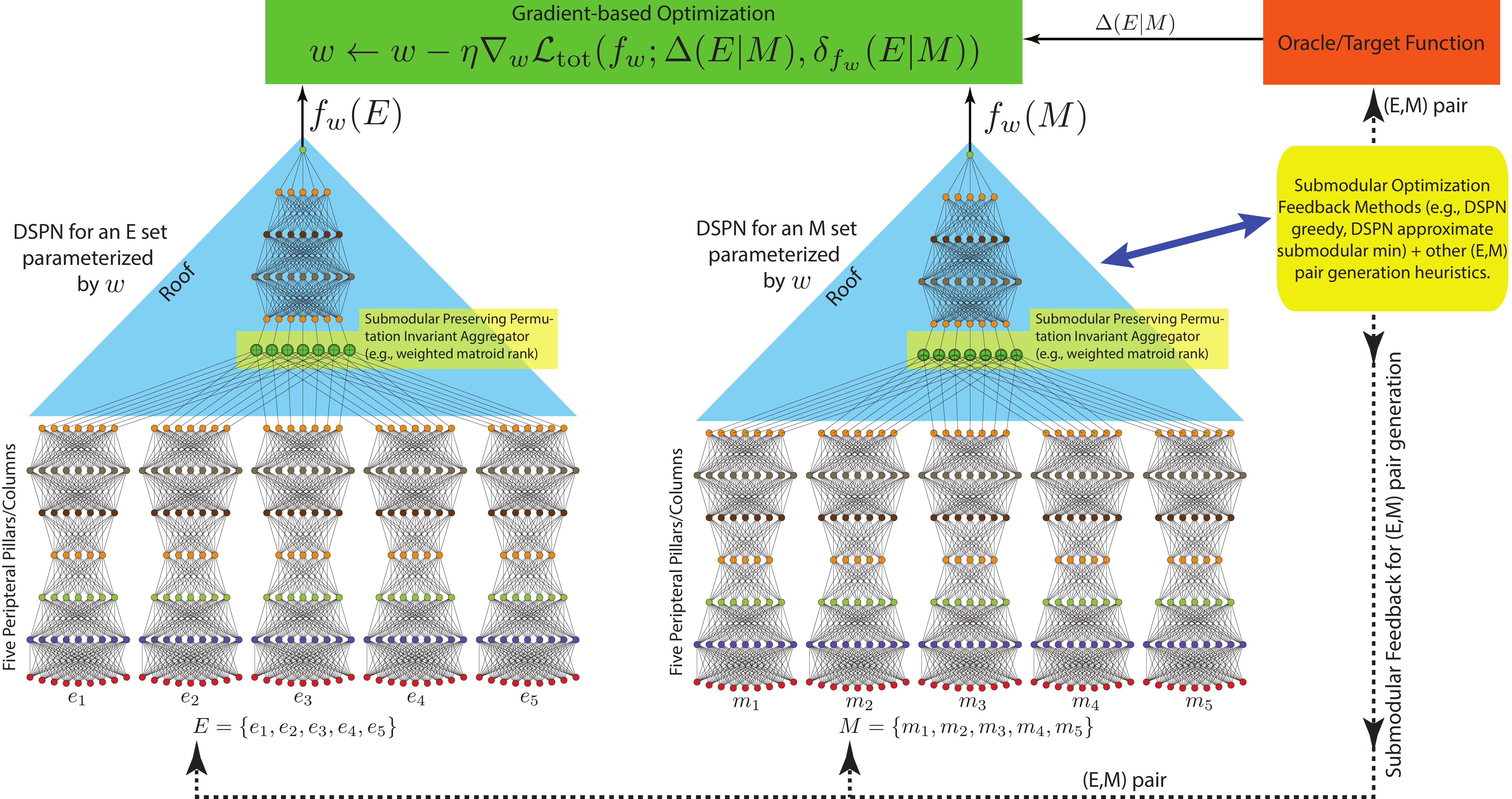}}
  \caption{The structure of a DSPN and the control flow of how a DSPN is trained; parameters are shared between the DSPNs processing E and M sets.}
\label{fig:overall_flow_main}
\end{figure}

\section{Sampling (E,M) Pairs}
\label{sec:what-subsets-train}

DSPN training via the peripteral loss requires a dataset
$\mathcal{D} = \{(E_i, M_i)\}_{i=1}^N$ of pairs of sets
where each $E_i,M_i \subseteq V$.  Since
the DSPN starts with the pillar stage, the learnt model can then be
tested on samples that are outside of $V$ as we do in
\S\ref{sec:Experiments}. Still, exhaustively training on all pairs of
subsets from $V$ is infeasible.  Therefore, the pairing strategy used
to incorporate the sampled sets is a critical component in our dataset
construction.  We thus propose several practical, both passive and
active, strategies to efficiently sample from this combinatorial
space. The passive strategies are heuristic-based, while the active
strategies are dependent on either the DSPN as it is being learnt or
optionally the target function if it is possible to optimize
(fortunately, Table~\ref{tab:ablation} shows target sampling is not
necessary). Our description here of these strategies 
is augmented with further discussion and analysis
in~\cref{sec:further-discussion-pair-sampling}.\looseness-1

{\bf Passive Sampling:} If we assume the maximum set size to be
$K$ so that $|E| \leq K$ and $|M|\leq K$ and a training dataset to have $C$ classes
or clusters, we may generate $E,M$ pairs in two ways. In
\textbf{Style-I}, we sample a homogeneous set of size $K$ from a
randomly chosen single class and a heterogeneous set of size $K$
randomly over the full dataset. In \textbf{Style-II}, we first choose
a random set of $C' \leq C$ classes,
where $C'$ is chosen uniformly at random over $[C]$,
and (if $C' < C$) subselect from
the ground set a set $V_{C'} \subseteq V$ containing only those
samples corresponding to those $C'$ classes.  To choose an $E$ set, we
perform K-Means clustering on $V_{C'}$, and pick the sample closest to
each cluster mean to construct our set.
To build an $M$ set, we choose one sample for each class/cluster in
$V_{C'}$ and then add its $\floor{K/C'} - 1$ nearest
neighbors. Importantly, the $E$ and $M$ sets in each pair sampled with
\textbf{Style-II} are much more difficult to distinguish than
\textbf{Style-I}. We also note that these sampling strategies are
imperfect, so the supposedly heterogeneous $E$ set may not truly be
heterogeneous in relation to its homogeneous counterpart $M$. However,
as discussed in \S\ref{sec:further-discussion-pair-sampling}, the
peripteral loss allows us to train a DSPN even if the set
pairs are imperfectly labeled since the oracle just flips the
sign of its preference. 
Simplified passive 2D sets are shown in Fig.~\ref{fig:set_passive} and 
a set-pairing illustration is seen in~Fig.~\ref{fig:set_pairing}.

{\bf Active Sampling:} Sets selected by the passive strategies
are independent of both the learner and the target function. Thus, due
to the combinatorial nature of the problem, there may be a discrepancy
between what sets are valued highly by the DSPN and/or the
target. This leads us to employ \emph{actively sampled sets} obtained
through two approaches. We refer to the first approach as \textbf{DSPN
  Feedback}, which obtains sets by maximizing or minimizing the DSPN
as it is being learnt. As DSPN is submodular, we may optimize it over
either full ground set $V$ or a class-restricted ground set to get
high-valued subsets, where the greedy algorithm can be used for
maximization~\citep{nemhauser1978analysis} and submodular
minimization~\citep{fujishige2005submodular} can be used to find the
homogeneous sets (although in practice, submodular minimization
heuristics suffice). We refer to the second approach as \textbf{Target
  Feedback}, where we obtain sets by optimizing the target. The latter
approach is only applicable when the target is optimizable (e.g.,
submodular), which would not be the case with a human oracle.  While we
do test ``Target Feedback'' below, our experiments in \S\ref{sec:ablations}
show that it is not necessary for good performance, which
portends well for our framework to be used in other GPC contexts
(e.g., RLHF learning via human feedback). Thus, while the $E,M$ pair generation
strategies mentioned here are specific to DSPN learning, our
peripteral loss could be used to train any deep set or other neural
network given access to graded oracle preferences, thus yielding a
general purpose GPC-style learning procedure.


\begin{figure}
\centering
\scalebox{0.83}{
    \begin{minipage}{0.5\textwidth}
        \centering
        \includegraphics[width=\linewidth]{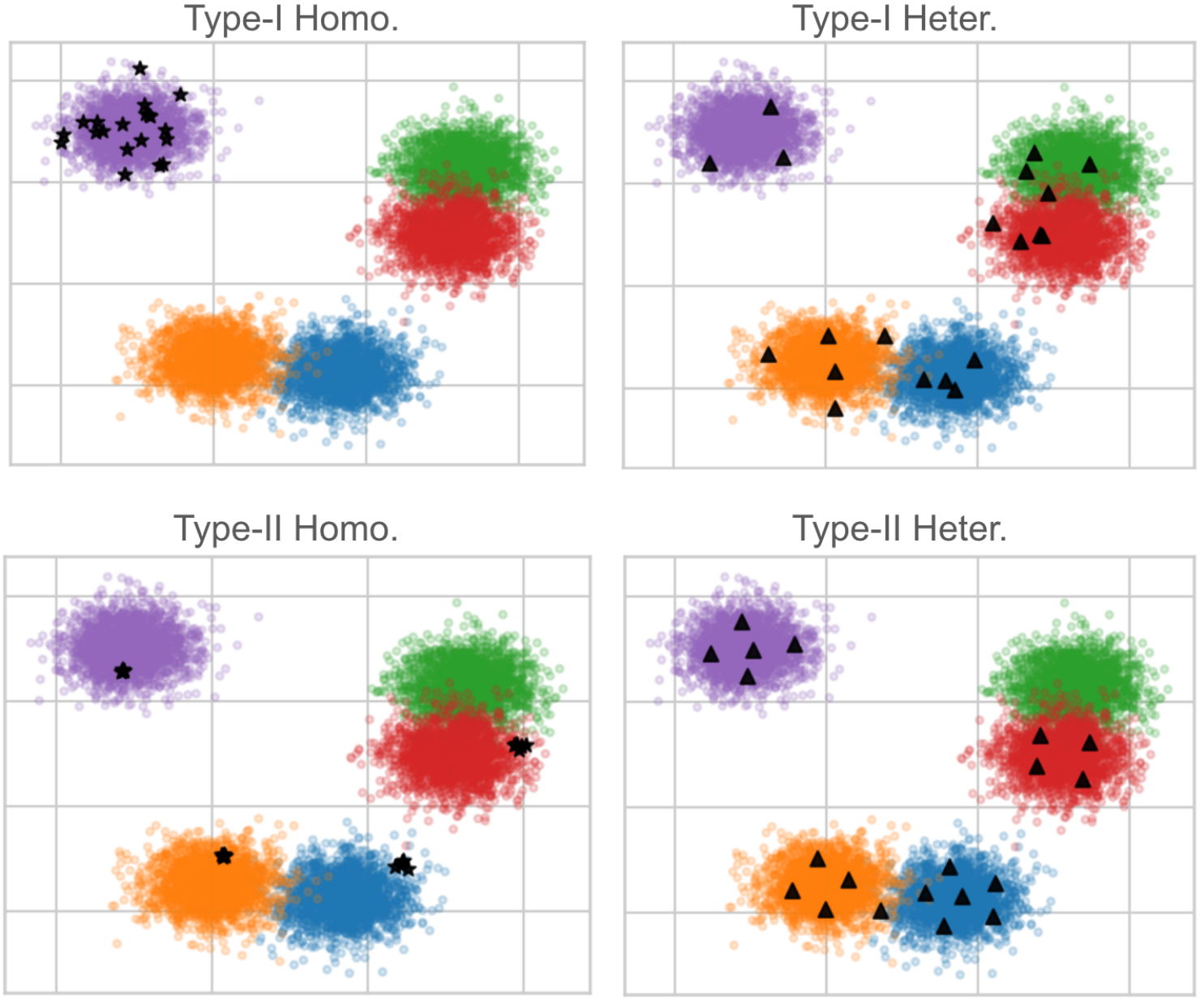}
        \caption{\small \textbf{\emph{Passive Sets.}} We consider a simple 2D ground set with 5 clusters/classes, as indicated by the colors. The various types of passively sampled sets are depicted (discussed in~\cref{sec:what-subsets-train}). Type-I homogeneous sets are randomly sampled from a single class, while Type-I heterogeneous sets are sampled from the full ground set. Meanwhile, Type-II restricts the ground set to a subset of classes and samples ``clumps" from each of the sampled classes to construct the homogeneous set, and diverse sets from each class to create the heterogeneous sets. Intuitively, using Type-I/II allows the learnt DSPN to model intraclass/interclass respectively. }
        \label{fig:set_passive}
    \end{minipage}%
    \hspace{0.05\textwidth}%
    \begin{minipage}{0.5\textwidth}
        \centering
        \includegraphics[width=0.72\linewidth]{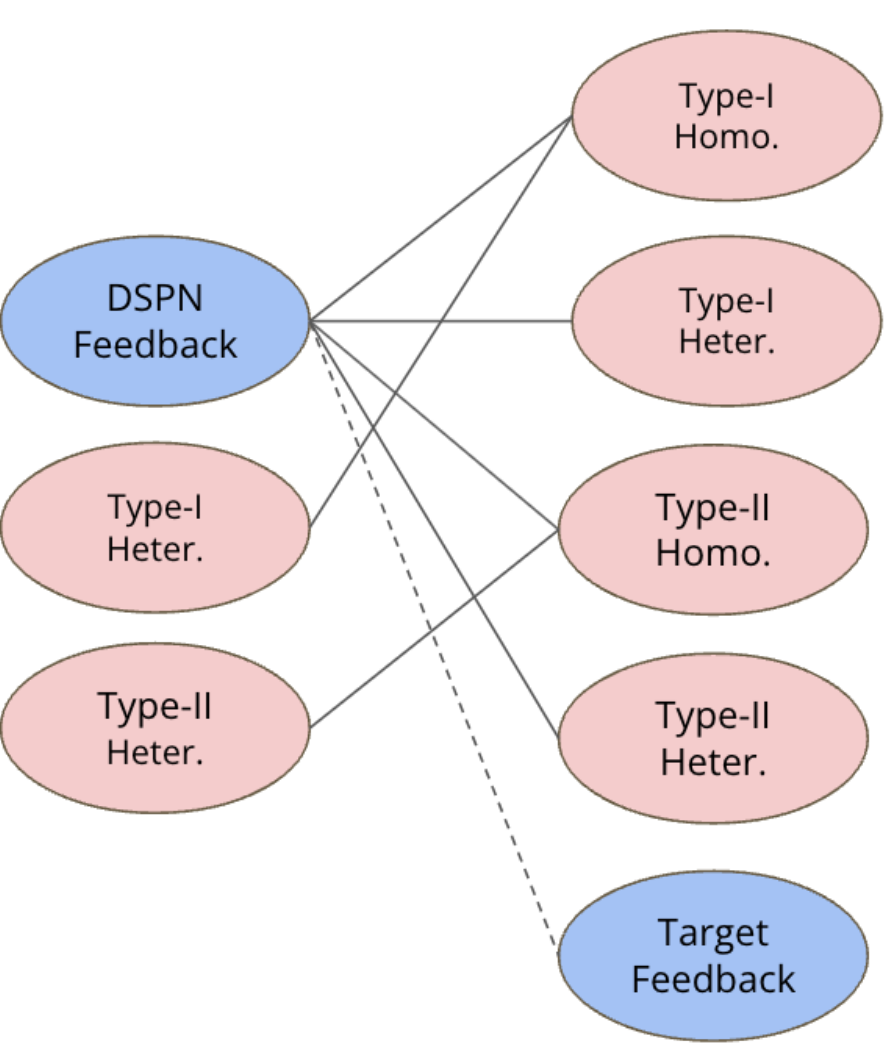}
        \caption{\small \textbf{\emph{Set Pairing.}} We actively sample sets by optimizing the DSPN as it is being learnt or optionally the target function. The depicted graph demonstrates how the actively sampled sets are integrated into $\mathcal{D} = \{(E_i, M_i)\}_i^N$. Th red/blue vertices refer to passively/actively sampled sets. An edge between two vertices indicates that sets from the categories denoted by the vertices are used to create $(E,M)$ pairs to train the DSPN. Dashed edges represent non-critical pairs.}
        \label{fig:set_pairing}
    \end{minipage}
    \label{fig:both}}
\end{figure}

\section{Experiments}
\label{sec:Experiments}

We evaluate the effectiveness of our framework by training a DSPN to emulate a costly target oracle FL function. We assert that DSPN training is deemed successful if the subsets recovered by maximizing the learnt DSPN are \textbf{(1)} assigned high values by the target function (\S\ref{sec:transfer}); and \textbf{(2)} are usable for a real downstream task such as training a predictive model (\S\ref{sec: exp_design}). 

{\bf Datasets:} We consider four image classification datasets: Imagenette, Imagewoof, CIFAR100, and Imagenet100. Imagenette, Imagewoof, and Imagenet100 are subsets of Imagenet-1k \citep{ILSVRC15} where a subset of the original classes is used. Imagenet100 (resp.\ CIFAR100)
have 100 classes and 130,000 (resp.\ 60,000) images, while Imagenette and Imagewoof have 10 classes (and 13,000 images).


{\bf DSPN Architecture:} We use CLIP ViT encoder \citep{radford2021learning} for all datasets to get the embeddings that are used as inputs to the DSPN which can be seen as a fixed part of the pillar. We use an additional 2-layer network as the trainable part of the pillar ($\pillarfuncparms$); for DSPN roof ($\rooffuncparms$) we use a 2-layered DSF for Imagenette/Imagewoof and 1-layered DSF for CIFAR100/Imagenet100. We use multiple concave activation functions, i.e., all of $\sigma(x) \in \{x, x^{0.5}, \ln(1+x), \tanh(x), 1-e^{-x}\}$ in every layer of the roof (ensuring $w_{\roofrparmschar}\geq 0$ during training) and thus allow the training procedure to decide between them.  See \S\ref{sec: experimental_details} for hyperparameter details. 

{\bf Target:} We use Facility Location (FL) as a target whose construction is rather expensive (quadratic) and also does not generalize to held-out data, but this is a key point. That is, we show how we distill from this expensive target FL to a cheaper and generalizable DSPN. Each target FL is created by computing a real-valued feature vector for each sample and then computing a non-negative similarity between each such vector. We used a CLIP ViT encoder to encode input images, and a tuned RBF kernel to construct similarities for our target FLs.

{\bf Training and Evaluation:} We train up to sets of size $|E|,|M| \leq 100$ for all datasets. We show that we generalize beyond this size not just on held-out subsets but even on held-out data. Given ground set $V$, we construct pairs $(E, M)$ both passively and actively, as discussed earlier. Please refer to \S\ref{sec:peript-netw-train} for the details on training. For the target oracle, we use a facility location function with similarity instantiated using a radial basis function kernel with a tuned kernel width. During the evaluation, both learned set functions and target oracle use a held-out ground set $V'$ of images, that is, $V' \cap V = \emptyset$. 





\subsection{Transfer from Target to Learner}

\label{sec:transfer}

We demonstrate that the target FL assigns high values to summaries that are generated by the DSPN in Figure~\ref{fig:transfer_results}. Specifically, we determine if $S$ is a good quality set if it has a high normalized FL evaluation $\frac{f_\targetsubscript(S)}{f_\targetsubscript(S_{FL})}$, where $f_\targetsubscript$ is the target FL and $S_{\targetsubscript}$ is the set generated by applying cardinality constrained greedy maximization to the target FL. 
\begin{wrapfigure}[19]{r}{0.6\textwidth}
  \vspace{-0.8\baselineskip}
  \centerline{\includegraphics[width=0.6\textwidth]{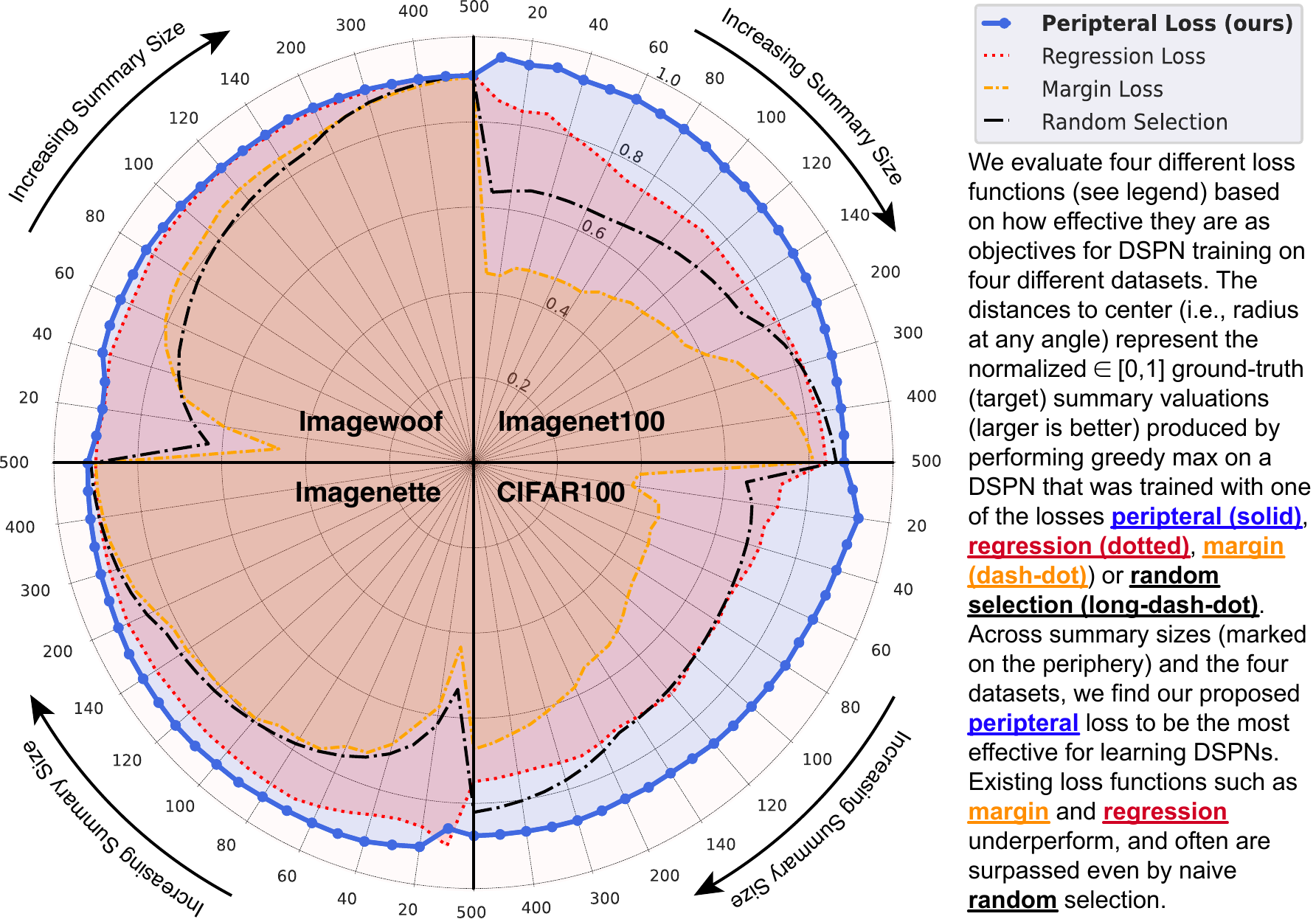}}
  \caption{\textbf{\emph{Transfer.}} We compare different loss functions in terms of their effectiveness at training a DSPN.}
  \label{fig:transfer_results}
\end{wrapfigure}
Therefore, we compare different DSPN models based on the normalized FL evaluation that their maximizers attain.  We compare DSPN models trained with three different losses: the peripteral loss, the contrastive max-margin style structured prediction loss~\citep{taskar05learning}, and a regression loss (shown as \emph{DSPN (peripteral)}, \emph{DSPN (margin)} and \emph{DSPN (regression)} respectively in Figure~\ref{fig:transfer_results}). Please refer \cref{sec: experimental_details} for the definition of baseline losses (Equations~\ref{eq:regression} and~\ref{eq:margin}). We also compare against randomly generated sets, which are denoted as \emph{Random}.  From this set of experiments, we find that the DSPN trained with the peripteral loss consistently outperforms all other baselines across different {\bf summary sizes} (budgets) and datasets. Interestingly, the DSPN outperforms random even on budgets greater than 100, despite never having encountered such subsets during training. This provides evidence that our GPC-based peripteral loss is the loss of choice for training DSPNs.

\begin{figure*}[h!]
\begin{subfigure}
\centering
    \centerline{\includegraphics[width=\linewidth]{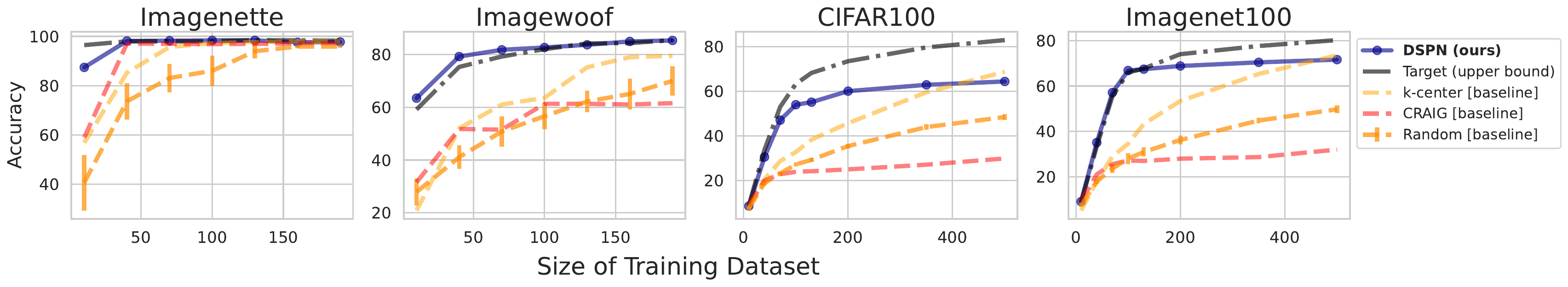}}
    \vspace{-.2in}
    \caption{\small \textbf{\emph{Offline Experimental Design.}} We compare different summarization procedures, and assess them based on the test accuracy that a linear model attains upon being trained on the summary. We find that maximizing the learnt DSPN generates high quality datasets, outperforming existing summarization techniques such as CRAIG and k-centers. In many cases, the DSPN approaches the Target FL even though the latter is aware of the duplicates present in the data. \bf{Takeaway: DSPN effectively chooses training samples for labeling from an unlabeled pool.}}
    \label{fig:acc_results}
    \vspace{.1in}
\end{subfigure}

\begin{subfigure}
\centering
    \centerline{\includegraphics[width=\linewidth]{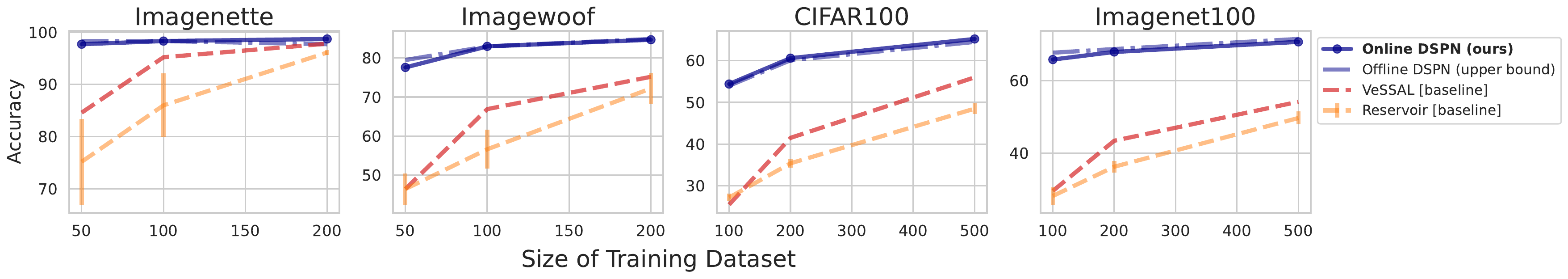}}
    \vspace{-.2in}
    \caption{\small \textbf{\emph{Online Experimental Design.}} The performance of a linear probe, trained based on a subset selected by an algorithm that does not use labels, is reported at varying budgets. \emph{DSPN-online} achieves far higher performance than other streaming algorithms such as \emph{reservoir sampling} and \emph{VeSSAL} and is performs comparably to offline algorithms. \bf{Takeaway: DSPN can effectively select training samples for labeling from a \emph{stream} of unlabeled data.}}
    \label{fig:al_results}
\end{subfigure}
\vspace{-10pt}
\label{fig:result_figures}
\end{figure*}

\subsection{Application to Experimental Design}
\label{sec: exp_design}
\emph{Experimental design} involves selecting a subset of unlabeled examples from a large pool for labeling to create a training set~\citep{pronzato2013design}. In this section, we evaluate the performance of a learnt DSPN based on how well a simple linear model generalizes when trained on a training set that was chosen by a DSPN. We consider an \emph{offline setting} where we assume the full unlabeled pool is available and an \emph{online setting} where the unlabeled pool is presented to the model in a non-iid stream. To test the robustness of our framework to real-world challenges, we add duplicates such that the ground set is heavily class-imbalanced (the procedure to generate the class-imbalanced set is shown in Appendix~\ref{sec: experimental_details}).
Importantly, in these experiments we train with one set $V$ and test on a completely held-out set $V'$ with $V \cap V' = \emptyset$; that is, the testing DSPN is on an entirely different ground set than the training DSPN.\looseness-1

{\bf Offline Setting:} The offline setting assumes that the full unlabeled pool is accessible by the summarization procedure. The results of these experiments are presented in Figure~\ref{fig:acc_results}. Our approach, denoted as \emph{DSPN} in the figure, applies greedy maximization to the learnt DSPN to generate the summary. We also consider a \emph{cheating} experiment by applying greedy to the target FL after removing duplicates from the ground set; we refer to this approach as \emph{Target FL}. Finally, we consider three baseline summarization procedures CRAIG~\citep{mirzasoleiman2020coresets}, K-centers~\citep{sener2017active}, and random subset selection~\cite{okanovic2023repeated}. The learnt DSPN achieves performance comparable to the target FL in all datasets up to a budget of 100. Beyond that, the difference between DSPN and the Target FL grows but the DSPN continues to generate higher quality than any of the baselines, showing DSPNs generalize to set sizes beyond the training set sizes.\looseness-1

{\bf Online Setting: } We consider an online setting where the data is presented to the model in a non-iid, class-incremental stream, as done in the continual learning literature~\citep{GEM2017, AGEM2019, aljundi2019gradient, zhou2023deep}. Unlike the facility location function which requires the full ground set for evaluation, the DSPN can be maximized in a streaming manner. Thus, we employ the streaming maximization algorithm proposed by~\cite{lavania2020streaming} to maximize the DSPN, due to its effectiveness in real-world settings and its lack of additional memory requirements. This approach is referred to as \emph{DSPN-online} in Figure~\ref{fig:al_results}.

We compare our approach against two streaming summarization baselines. The first approach, \emph{reservoir sampling}~\citep{reservoirsampling}, is an algorithm that enables random sampling of a fixed number of items from a data stream as if the sample was drawn from the entire population at once. The second approach, \emph{VeSSAL}~\citep{saran2023streaming}, seeks to select samples such that the summary maximizes the determinant of the gram matrix constructed from the model embeddings. We also apply (offline) greedy maximization to the target facility location (\emph{Target FL-offline}) and the learned DSPN (\emph{DSPN-offline}), which serve as upper bounds to what is attainable by streaming DSPN maximization.

In Figure~\ref{fig:al_results}, we present quantitative results where a linear model is trained on a given streaming summary after labels have been obtained. We find that \emph{DSPN-online} significantly outperforms the other streaming baselines on {\bf all} datasets. Surprisingly, \emph{DSPN-offline} and \emph{DSPN-online} achieve similar performance, demonstrating the flexibility of DSPNs. \emph{FL-offline} tends to generate the highest quality summaries, but requires access to the full ground set and assumes duplicates are manually removed.\looseness-1

\subsection{Ablations}
\label{sec:ablations}
\begin{wraptable}{r}{0.55\textwidth}
\centering
\vspace{-10pt}
\resizebox{0.55\textwidth}{!}{
\begin{tabular}{lccc}
\toprule
Method & FL Eval & Offline Acc & Online Acc \\
\midrule
DSPN & \underline{0.92} & \textbf{67.8} & \underline{65.9} \\
\midrule
$\quad$w/o target feedback & \textbf{0.93} & \underline{67.7} & \textbf{66.9} \\
$\quad$w/o feedback & 0.83 & 47.7 & 30.1 \\
$\quad$w/o type-II sampling & 0.64 & 13.1 & 10.7 \\
\midrule
$\quad$PC rather than GPC & 0.90 & 62.8 & 63.4 \\
\midrule
$\quad$w/o learnt pillar & 0.71 & 39.1 & 38.0 \\
\midrule
Deep Set & 0.56 & 9.2 & 3.5 \\
Set Transformer & 0.34 & 13.8 & -- \\
\bottomrule
\end{tabular}}
\caption{\textbf{\emph{Ablations on Imagenet100}}. The importance of
  feedback, learnt features, enforcing submodularity, and GPC (in each
  column, {\bf bold} is best, \underline{underline} is 2nd
  best). ``w/o feedback" indicates a DSPN model trained without
  feedback, while ``w/o target feedback'' is when the oracle is only
  queried. For Set Transformer, streaming maximization fails to
  produce a summary of the required size (therefore it is omitted).
  ``PC rather than GPC'' uses non-graded pairwise comparisons, showing
  the benefits of grading, consistent with~\cite{Lingel2022}.  All
  results are for a size-100 set.\looseness-1}
\label{tab:ablation}

\end{wraptable}

Table~\ref{tab:ablation}, provides Imagenet100 results on sets of size 100 that explore the importance of various components of the DSPN. We report three numbers for each ablation:  (1) \emph{FL Eval} which corresponds to the normalized FL evaluation discussed in \S~\ref{sec:transfer} (2) \emph{Offline Acc} which denotes the final accuracy of a linear model in the offline experimental design setting and (3) \emph{Online Acc} denotes the final accuracy of a linear model in the online experimental design setting. 

\mbox{{\bf Removing (FL) Feedback:}} Active selection is obtained by maximizing the DSPN or Target FL during training (denoted as \textit{DSPN feedback} and \textit{Target feedback} respectively). We find that only removing the sets obtained from target feedback has little to no impact on the final performance of the DSPN, suggesting that it is not necessary to optimize the target function. \emph{This shows that our framework can be applied to GPC learning even when it is not possible to optimize the target, such as with human oracles.} However, we find that completely removing Active selection significantly hurts the final DSPN \looseness-1.


{\bf Removing Learnable Pillar:} 
We assess the utility of a learnable pillar, by training a DSPN with a fully frozen pillar. In this case, we attempt to learn the weights of a DSF on random projections of CLIP features. We discover that the DSPN without learnable pillars significantly underperforms its counterpart. We discuss this further in \cref{sec: app_Qualitative}  \looseness-1.

{\bf Removing Submodularity:} We study the utility of submodular constraints to learn the target. To this end, we compare DSPN against a Deep Set~\citep{deepset} and SetTransformer~\citep{settransformer} architecture in the third section of \cref{tab:ablation}. we discover that the absence of submodularity {\bf significantly hampers} the performance for all the metrics, since the greedy algorithm has no guarantees on an arbitrary set function.\looseness-1

\section{Conclusion and Future Work}
\label{sec:concl-future-work}
We have proposed DSPNs and methods for training/distilling them. Since the computational cost for querying a DSPN is independent of dataset size, and the training scheme leverages automatically generated $(E,M)$ pairs, our methods learn practical submodular functions scalably. It may ultimately be possible to scale our training framework to massive heterogeneous datasets to develop foundation DSPNs having strong \emph{zero-shot summarization} capabilities on a wide array of domains. We also propose the $\emph{peripteral loss}$ which leverages numerically graded relationships between pairs of objects. The graded pairwise preferences are provided from an oracle and are used to learn a submodular function that captures these relationships. However, the scope of the peripteral loss extends far beyond training DSPNs and has many applications
such as new contrastive representation learning procedures, and learning a reward model in the context of RLHF with real human oracles.\looseness-1

\section{Acknowledgments}
\label{sec: ack}
We thank Suraj Kothawade, Krishnateja Killamsetty and Rishabh Iyer for several early discussions. This work is supported in part by the CONIX Research Center, one of six centers in JUMP, a Semiconductor Research Corporation (SRC) program sponsored by DARPA and by NSF Grant Nos.\ IIS-2106937 and IIS-2148367.

\newpage
\bibliography{bibfile}
\bibliographystyle{plain}
\newpage
\appendix
\tableofcontents
\section{Other Related Work}
\label{sec:other-related-work}
In addition to the previously discussed literature in \S\ref{sec:Introduction} that addresses learning through pairwise comparison, human preference elicitation, and contrastive learning, this section provides a comprehensive review of both theoretical and empirical studies related to the learning of set functions.

\subsection{Relationship to contrastive learning}
\label{sec:relat-contr-learn}

We are inspired by contrastive
learning~\citep{balestriero2023cookbook}, which has pushed the
boundaries of deep learning by leveraging the massive information in
unlabeled data. Unlike standard supervised learning which teaches
models to associate samples to human-supplied labels, contrastive
learning encourages models to capture relationships between pairs,
groups, or sections of unlabeled samples. It does this by
``contrasting'' positive pairs (i.e., similar data points) against
negative pairs (dissimilar data points). Mathematically, this approach
uses loss functions that encourage the model to learn embeddings where
positive pairs become ``close'' (according to some distance measure),
while negative pairs become ``far.''  A common setting is the
contrastive loss, such as the triplet loss or the Siamese network
loss. For instance, in the triplet loss framework, the model is
trained on triplets of data points consisting of an anchor $x$, a
positive example $x^+$ (similar to the anchor), and a negative example
$x^-$ (dissimilar from the anchor). Optimizing a model with such a
loss function encourages a representation that respects these
positive/negative pairwise preferences.

On the other hand, such contrastive approaches are analogous to
binary-only pairwise comparison elicitations. That is, existing
contrastive losses neither utilize information regarding how positive
the pair $x,x^+$ is nor how negative the pair $x,x^-$ is. A more
nuanced graded comparison, if available, would be an improvement.  In
fact, the benefit of graded comparisons is analogous to the benefits
of using soft over hard labels in supervised training (say from model
distillation) which has benefits in uncertainty representation,
regularization, noise mitigation, adjustable smoothness via softmax temperature, and more
efficient knowledge transfer. Another limitation of previous
contrastive learning is that it operates only with positive or
negative groups of size two (pairs) rather than larger groups.  While
it is often the case that the anchor $x$ is contrasted with many
negative examples, it is always done so pairwise, considering each
$x^-$ with $x$ together in a sum of pairwise terms.  Also,
while~\citep{tian2020contrastive,jiang2023unihpe} is the only work we are aware of that
considers more than two positive examples at a time, each of these positive
samples considered are only paired up with a negative sample,
leading to $\binom{k}{2}$ pairings with $k$ positive examples. Sums of
pairwise terms do not allow for the consideration of higher order
relationships, e.g., a given pair might be measured differently in the
context of a third sample. In addition to addressing the two
aforementioned problems, DSPNs and our GPC-style loss addresses this
issue as well, although our learning goal in this paper is
learning submodularity not in representation learning (the latter of
which we leave to future work).

\subsection{Learning Non-submodular Set functions}
\label{sec:learn-non-subm}

Recently, there has been a growing focus on leveraging data-driven neural methods for estimating set functions, with practical applications spanning anomaly detection, amortized clustering of mixture models, and point cloud classification \citep{qi2017pointnet}. Common to these applications is the fundamental approach of embedding each element within a set into dense embeddings, followed by permutation-invariant aggregation (set-to-vector transformation), and subsequent integration into a deep neural network \citep{deepset, horn2020set, ravanbakhsh2017equivariance}. The significance of incorporating interaction among set elements has been emphasized by \cite{settransformer}, who argue that conventional aggregations, such as average-pooling, can be interpreted as a specialized form of attention. Consequently, they propose an attention-based architecture known as the set-transformer. Building upon this, \cite{mialon2020trainable} have made computational enhancements to the attention-based architecture. Lastly, there has been a recent exploration into a probabilistic interpretation of set-to-vector transformations \citep{zhou2023deep, kim2022differentiable}, further refining the methodology.

The works mentioned above diverge from our approach for several reasons. Firstly, they primarily focus on supervised learning oracles utilizing static datasets to learn neural parameters, whereas our method draws inspiration from max-margin contrastive learning and employs a dynamically updated dataset. Secondly, these existing works lack architectural and/or optimization constraints ensuring the submodularity of the learned set function, in contrast to our work on learning deep submodular functions, where submodularity is explicitly guaranteed. However, for a comprehensive evaluation against existing literature, we include Deep Sets and Set-Transformer as a baseline for comparison. Thirdly, the set sizes used in these works are much smaller (30 in many experiments) compared to ours which go up to 500. Lastly, the learned set function in these works lacks a clear interpretation associated with extremal—sets that (locally) maximize or minimize the set function. In contrast, our Deep Submodular Function (DSF) yields  heterogeneous/homogeneous sets upon maximization/minimization. It is noteworthy that the choice of a submodular function as the inductive bias may not be essential for certain tasks, such as point-cloud classification and amortized clustering of mixture models. However, it proves to be a plausible inductive bias in scenarios where the objective is to count unique characters \citep{deepset, zhou2023deep, kim2022differentiable, settransformer}, since it can be exactly be expressed as a partition matroid rank function, which lies in the family of deep submodular function.  

Beyond neural approaches, one of the previous works \citep{mariet2019learning} has also considered learning Determinantal Point Process, or DPPs \citep{kulesza2012determinantal}, by learning the $\mathcal{O}(n^2)$ similarity matrix by noise contrastive estimation \citep{gutmann2012noise}. However, their learning procedure is different from ours, and sampling from a DPP is much more computationally expensive than maximizing a submodular function. 


\subsection{Theoretical Work on Learning Submodular Functions}
\label{sec:theor-work-learn}

Numerous works have investigated strategies for understanding submodular functions from a theoretical perspective. \cite{goemans2009approximating} examined the approximation of an oracle for every subset, utilizing a multiplicative approximation approach to approximate the submodular polyhedron of the oracle using Löwner ellipsoids. In contrast, \cite{balcan2011learning, balcan2018submodular} focused on approximating the oracle with high probability on sets sampled from specific distributions (often referred to as Probably Mostly Approximately Correct (PMAC) learning) by transforming the problem into the identification of linear separators. Notably, \cite{balcan2018submodular} demonstrated the impracticality of approximating any arbitrary monotone non-decreasing submodular function to arbitrary closeness under PMAC learning in polynomial queries. Subsequently, \cite{balcan2016learning} proposed an algorithm for learning the oracle through pairwise comparisons in polynomial queries. While this work shares theoretical commonalities with ours, it is important to highlight that \cite{balcan2016learning} examined set pairs sampled from a static distribution, contrasting with our approach, which involves a dynamically changing distribution of set pairs due to our submodular feedback. Remarkably, across the aforementioned works, the learned functions exhibit a form that can be represented easily using a Deep Submodular Function (DSF), a facet that our work incorporates and explores further.

\subsection{Neural Estimation of Submodular Functions}
\label{sec:neur-estim-subm}

Since submodular functions have $2^{|V|}$ degrees of freedom, empirical approaches of learning submodular functions restrict the search space  to a parametric subclass such as feature-based functions or DSFs. Early data-driven learning techniques use the max-margin structured prediction framework introduced in \cite{taskar05learning}, which requires human annotators to manually construct summaries/sets that should be assigned high values by the learnt submodular function \citep{lin2012learning, Tschiatschek2014Learning, sipos-etal2012Large, Gygli_2015_CVPR, GHADIMI2020113392}. These approaches have consistently demonstrated that learned coefficients, as opposed to handcrafted ones, result in submodular functions that can generate much higher quality summaries. However, these approaches depend on human-constructed summaries which can be prohibitively expensive to procure at scale. Furthermore, that the features suitable for the feature-based function are \textit{not} learned, unlike our work where we jointly learn features and associated parameters for the submodular function. Despite the empirical success of this framework, these approaches all require sets chosen by humans which are difficult to procure at scale. 

More recently, \cite{abirneural} proposed to estimate a submodular function using recursive composition with a learned concave function, arguing that the lack of a ``good" concave function is a weakness of DSF (and only used one concave function in their experiments). In our work, we alleviate this by showing that one can learn a good DSF by training with multiple concave functions simultaneously with varying degrees of saturation. It should also be noted that there is no theoretical discussion if the representation capacity of the function class strictly increases with more recursive steps, unlike DSF, where the capacity strictly increases with additional layers. Furthermore, they learn the parameters in a supervised manner, given the annotated high-valued summaries, unlike our work which does so in a self-supervised fashion. Lastly, the learned modular function for DSF has the interpretation of capturing concepts (\cref{sec: app_Qualitative}) which is not in their case. Remarkably, it is an interesting extension where one combines the technique proposed in \cite{abirneural} to learn the concave function jointly with our technique for learning the parameters.

\cite{lavania2019} proposes an unsupervised framework to learn submodular functions, similar to our work. Their approach involves the maximization of a handcrafted objective function, evaluating the quality of weights through measurements of submodular function properties like curvature and sharpness. Notably, they do not engage in the simultaneous joint learning of features alongside parameters and instead they first train an auto-encoder to get the features. To the best of our knowledge, no existing studies have effectively undertaken the joint learning of submodular function parameters and features, a key aspect of our proposed methodology. 

Finally, we note the existence of related work using the same nomenclature, DSPN, introduced by \cite{zhang2019deep}, which stands for Deep Set Prediction Network. The primary objective of \cite{zhang2019deep} is to predict a set based on dense features. In contrast, our research focuses on the development of a deep submodular function (DSF) capable of generating sets through optimization. 




\section{Submodular Functions and Weighted Matroid Rank}
\label{sec:subm-funct-weight}

Submodular set functions effectively capture notions of diversity and
representativeness.  Continuing on from
\S\ref{sec:Introduction}, it is known that they are useful in
machine learning in the context of data subset selection
\citep{mirzasoleiman2020coresets, killamsetty2021glister,
  killamsetty2021grad, wei2015submodularity}, feature selection
\citep{liu2013submodular}, summarization \citep{lin2011class}, active
learning \citep{wei2015submodularity,kothawade2021similar},
experimental design \citep{bhatt2024experimental}, more recently in
reinforcement learning \citep{prajapat2023submodular} and several other
areas that involve some form of discrete optimization. Submodular
functions also appear naturally in many other areas, such as
theoretical computer science and statistical physics
\citep{tohidi2020submodularity,bilmes2022submodularity}.

Submodular functions are also of interest since they enjoy properties sufficient for efficient
optimization. For a ground set of size $n$, they lie in a cone
of dimension $2^n$ in $\mathbb{R}^{2^n}$, but can still be minimized
exactly without constraints in polynomial time
\citep{fujishige2005submodular}. Even though submodular function
maximization is NP-hard, it offers a fast constant factor
approximation algorithm, for instance in the cardinality-constrained
case, greedy results in $1-1/e$ guarantee for polymatroid functions
\citep{nemhauser1978analysis, minoux2005accelerated}. Similarly one
can give guarantees for other constrained cases such as knapsack
constraint or more complicated independence or matroid constraints
\citep{buchbinder2014submodular, calinescu2011maximizing, lee2009non,
  iyer2013submodular, iyer2013fast}.

In this section, we offer a brief primer on submodularity and their relationship
to matroid rank.

Any function $f: 2^V \to \mathbb R$ that maps from subsets of some
underlying ground set $V$ of size $|V|=n$ is called a discrete set
function.  For a set function to be \emph{submodular}, it
must satisfy the property of diminishing returns. This means
that for any $X \subseteq Y \subseteq V$ and
any $v \in V \setminus Y$ the gain of adding $v$ to the smaller
set is larger than the gain of adding $v$ to the larger set, meaning:
\begin{equation}
\label{eqn:submodular_inequality}
   f(v|X) \triangleq f(X \cup \{v\}) -  f(X) \geq  f(Y \cup \{v\}) -  f(Y) \triangleq f(v|Y)
\end{equation}
We use the (conditional) gain notation
$f(v|X) \triangleq f(X \cup \{v\}) - f(X)$ regularly in the paper.
We also assume also that all submodular functions are
monotone non-decreasing (or just monotone), which means that for any
$X \subseteq Y$ we have that $f(X) \leq f(Y)$ and non-negative,
meaning that $0 \leq f(X)$ for any $X \subseteq V$.  We also assume
$f$ is normalized so that at the emptyset $f(\emptyset) = 0$. Such
normalized monotone non-decreasing submodular functions necessarily
are always non-negative, and are commonly referred to as polymatroid
functions~\citep{bilmes2022submodularity,cunningham1983decomposition, cunningham1985optimal, lovasz1980matroid}.

If Eqn~\eqref{eqn:submodular_inequality} holds with
equality for all sets, then the function is referred to as a
\emph{modular} function which implies that $f$ can be equated with a
length $n$ vector, that is for all $X \subseteq V$,
$f(X) = \sum_{x \in X} f(x)$, which means the vector associated with
$f$ is $( f(v) : v \in V ) \in \mathbb R^n$. We often use simplified
notation for modular functions --- that is, given a length $n$ vector
$m \in \mathbb R^n$, we can define the modular function
$m : 2^V \to \mathbb R$ as $m(A) = \sum_{a \in A} m(a)$.

If the inequality in Eqn~\eqref{eqn:submodular_inequality} holds in the reverse,
meaning
\begin{equation}
   f(v|X) \triangleq f(X \cup \{v\}) -  f(X) \leq  f(Y \cup \{v\}) -  f(Y) \triangleq f(v|Y)
\end{equation}
then the function is known as a \emph{supermodular} function.

For completeness, we note that an equivalent definition of
submodularity would say that for any two sets $A,B \subseteq V$, we
have that $f(A) + f(B) \geq f(A \cup B) + f(A \cap B)$. It is
not hard to show that this and Inequality~\eqref{eqn:submodular_inequality}
are identical~\citep{bilmes2022submodularity,fujishige2005submodular}.

Given a finite set $V$ and a set of subsets $\mathcal I = \{I_1, I_2, \dots \}$,
the pair $(V,\mathcal I)$ is known as a {\emph set system}.

Matroids~\citep{lawler1976combinatorial,oxley2011matroid,schrijver2003combinatorial}
are useful algebraic structures that are strongly related to
submodular functions~\citep{fujishige2005submodular}. Given a finite
ground set $V$ of size $|V|=n$, a matroid $\mathcal M$ is a set system
that consists of the pair $(V,\mathcal I)$ where $V$ is said ground
set and $\mathcal I = \{I_1,I_2, \dots \}$ is a set of subsets of $V$,
meaning $I_i \subseteq V$ for all $i$. To define matroid $\mathcal M$
we say $\mathcal M= (V,\mathcal I)$.  The subsets $\mathcal I$ are
known as the independent sets (meaning, for any $I \in \mathcal I$,
$I$ is said to be independent) and they have certain properties that
make this the set system interesting and useful. Formally, a
matroid is defined as follows:
\begin{definition}[Matroid]
  \label{def:matroid}
  Given a ground set $V$ of size $|V|=n$, a matroid $\mathcal M = (V,\mathcal I)$
  is a set system where the set of subsets $\mathcal I = (I_1,I_2, \dots, )$
  have the following properties:
\begin{enumerate}
\item $\emptyset \in \mathcal I$
\item If $I \in \mathcal I$ and $I' \subset I$, then $I' \in \mathcal I$
\item For any $I,I' \in \mathcal I$ with $|I| < |I'|$, there exists
  a $v \in I' \setminus I$ such that $I \cup \{ v \} \in \mathcal I$.
\end{enumerate}
\end{definition}

Matroids generalize the algebraic properties of vectors in a vector
space, where independence of a set corresponds to sets of vectors that
are linearly independent. If we think $V$ as a set of indices of
vectors, and $A = \{a_1, a_2, \dots\} \subseteq V$, then the $A$ being
independent corresponds to the vectors associated with $A$ being
linearly independent. Matroids however capture the algebraic
properties of independence and there are valid matroids that can not be
representable by any vector space.  Note also that the algebraic
structure is similar to that of collections of random variables that
might or might not be statistically independent.

While there are many types and useful properties of matroids, the
rank of a matroid measures the independence of a set based on the largest
independent set that it contains, namely
\begin{equation}
\text{rank}_{\mathcal M}(A) = \max_{I \in \mathcal I} | A \cap I|  = \max \{ B : B \subseteq A \text{ and } B \in \mathcal I \}
\end{equation}
For any matroid, the rank function is submodular, meaning for any
$A,B \subseteq V$,
$\text{rank}_{\mathcal M}(A) + \text{rank}_{\mathcal M}(B) \geq
\text{rank}_{\mathcal M}(A \cup B) + \text{rank}_{\mathcal M}(A \cap
B)$. For example, the function $f(A) = \min(|A|,\alpha)$ with integral
valued $\alpha$ is a classic concave-composed with modular function
(which is thus submodular) but is also the rank function for partition
matroid with one partition block and limit of $\alpha$.
A matroid rank function is also a polymatroid function since
$\text{rank}_{\mathcal M}(\emptyset) = 0$ and
$\text{rank}_{\mathcal M}(A) \leq \text{rank}_{\mathcal M}(B)$ whenever $A \subseteq B$.

A matroid rank functions can be significantly extended when it is
associated with a non-negative vector $m \in \mathbb R^{|V|}_+$ of
length $|V|=n$. With such a vector, one can define a weighted matroid
rank function that returns the maximum weight of any base of a set $A$:
\begin{definition}[Weighted Matroid Rank Function]
\label{def:weighted_matroid_rank}
  Given a matroid $\mathcal M = (V,\mathcal I)$ and a non-negative vector $m \in \mathbb R^n_+$,
  the weighted matroid rank function for this matroid and vector is defined as:
    \begin{equation}
      \text{rank}_{\mathcal M,m}(A) = \max_{I \in \mathcal I} m(A \cap I)
    \end{equation}
\end{definition}
A weighted matroid rank function is still submodular and is
still a polymatroid function.
In fact, weighted matroid rank functions generalize many much more recognizable functions. Here
are several examples.
\begin{enumerate}
\item Consider a free matroid $M = (V,\mathcal I)$ where $\mathcal I = 2^V$. This means
  that all $2^n$ subsets of $V$ are independent in the matroid. Then given non-negative vector $m \in \mathbb R_+^{|V|}$,
  the weighted matroid rank function has the form $\text{rank}_{\mathcal M, m}(A) = \sum_{a \in A} m(a)$,
  thus the weighted matroid rank function is just the modular function $m$.

\item Consider a one-partition matroid rank function with limit
  $k=1$. This is a matroid $M = (V,\mathcal I)$ where
  $\mathcal I = \{ A \subseteq V : |A| = 1\}$, meaning all sets of
  size one are independent but no other set is independent in the
  matroid. Given non-negative vector $m \in \mathbb R_+^{|V|}$,
  the weighted matroid rank function has the form
  $\text{rank}_{\mathcal M,m}(A) = \max_{a \in A} m(a)$.

\item The facility location function can be seen as the sum of a
  weighted matroid rank functions using the same matroid as the
  previous example but using different weights for each term in the
  sum. That is, given a similarity matrix $[S]_{ai}$ of non-negative
  entries, the facility location function has form
  $f(A) = \sum_{i=1}^n \max_{a \in A} s_{ai}$. Hence, using the same
  matroid as the previous case, the facility location takes the form
  $f(A) = \sum_{i=1}^M \text{rank}_{\mathcal M, s_{:,i}}(A)$ where $s_{:,i}$ is
  the $i^\text{th}$ column of the matrix.

\end{enumerate}

Another interesting (and apparently previously unreported) property of
weighted matroid rank functions is that they are always permutation
invariant in the same sense as a deep set
function~\citep{deepset} is permutation invariant. Firstly, we restate the definition
of permutation invariance from~\citep{deepset}.
\begin{definition}[Permutation Invariance~\citep{deepset}]
  \label{sec:subm-funct-weight-1}
  A function $f : 2^V \to \mathbb R$ acting on sets is said to be permutation invariant if
  the function is invariant to the order of objects in the set. That is
  for any set $A \subseteq V$ with $A = \{ a_1, a_2, \dots, a_m\}$, and for any permutation $\pi: A \to A$, we have that
  for any set $f(\{a_1,a_2, \dots, a_m\}) = f(\{ \pi(a_1), \pi(a_2), \dots, \pi(a_m) \})$.
\end{definition}

Thus, permutation invariance means that the order in which elements
are considered or arranged does not affect the outcome of the function
that is applied to a set. For the weighted matroid rank function, this
means that if we take a subset $A \subseteq V$ and permute its
elements, the $\text{rank}_{\mathcal M,m}(A)$ stays the same. The
reason is because the rank function is concerned only with the
elements present in the subset, not the order in which they are
listed. Thus, permutation invariance aligns with the properties of
matroids as we formalize in the next lemma.

\wmatroidrankperminvariant*%
\begin{proof}
  The weighted matroid rank function for matroid $\mathcal M = (V,\mathcal I)$
  is defined as
  \begin{equation}
    \text{rank}_m(A) = \max_{I \in \mathcal I} m(A \cap I) =  \max_{I \in \mathcal I} \sum_{a \in A \cap I} m(a)
  \end{equation}
  and since nowhere in this definition are the order of the elements in $A$ used,
  but rather only a sum over some of the elements as shown on the right hand side of the above,
  the function is permutation invariant.
  This also immediately follows given Proposition~\ref{sec:subm-funct-perm-invariant}.
\end{proof}

In fact, this property is true for any submodular function.
\begin{proposition}{Submodularity and Permutation Invariance.}
\label{sec:subm-funct-perm-invariant}
Any submodular set function $f : 2^V \to \reals$ is permutation invariant.
\end{proposition}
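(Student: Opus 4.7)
The plan is very short because the proposition is in fact trivially true for \emph{any} set function: submodularity plays no role. I would first observe that the domain of $f$ is $2^V$, whose elements are sets, not sequences. Given $A = \{a_1, a_2, \dots, a_m\} \subseteq V$ and any permutation $\pi : A \to A$ (i.e.\ a bijection of $A$ onto itself), the collection $\{\pi(a_1), \pi(a_2), \dots, \pi(a_m)\}$ is, by definition of set equality, exactly the same subset of $V$ as $A$ itself. Hence $f(\{a_1, \dots, a_m\}) = f(\{\pi(a_1), \dots, \pi(a_m)\})$ holds as a tautology, matching Definition~\ref{sec:subm-funct-weight-1}.

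The second step is to make explicit that nothing in the argument used the diminishing-returns inequality~\eqref{eqn:submodular_inequality}, monotonicity, or non-negativity. In particular, modular, supermodular, and arbitrary non-submodular set functions are all permutation invariant for the same reason. This is worth stating so the reader does not suspect the hypothesis is being used implicitly; the submodularity appears in the statement only because that is the ambient class of functions of interest in Appendix~\ref{sec:subm-funct-weight}, and because the proposition is invoked to cover Lemma~\ref{lemma:permutation_invariance} as a one-line corollary.

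The only mild subtlety — and what I would expect a careful reader to want spelled out — is reconciling the two slightly different phrasings of ``permutation invariance'' one sees in the literature: (i) invariance under permutations $\pi : A \to A$ of a given set's elements, as in Definition~\ref{sec:subm-funct-weight-1}, and (ii) invariance of a function defined on $n$-tuples when its arguments are reordered by a permutation $\pi \in S_n$. For a \emph{set} function the first definition is vacuous, since $\{\pi(a_1), \dots, \pi(a_m)\} = \{a_1, \dots, a_m\}$ as sets; for a function on tuples, one would need to quotient by the symmetric-group action. Noting this distinction lets me close the proof cleanly in a single line. No main obstacle arises; the ``hard part,'' if any, is simply flagging that the proposition is a definitional consequence rather than a theorem requiring submodularity.
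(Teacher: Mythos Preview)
Your proposal is correct and matches the paper's treatment: the paper states the proposition without proof, treating it as self-evident (and indeed invokes it as a one-line justification inside the proof of Lemma~\ref{lemma:permutation_invariance}). Your observation that the result is a definitional tautology for \emph{any} set function, with submodularity playing no role, is exactly the right reading.
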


\section{Primer on and Novelty over Deep Submodular Functions}
\label{sec:prim-deep-subm}


In this section, we offer a brief background on deep submodular
functions (DSFs) and demonstrate how our results in the present paper
are {\bf not} just incremental over this previous work --- rather the
present paper offers significant novelty over, as far as we can tell
from an extensive literature search, previously published papers.

\cite{stobbe2010efficient} proposed a scalable algorithm for minimizing a class of submodular functions known as \emph{decomposable functions}. In general, given a collection of modular functions, $\{m_i\}^N_{i=1}$ where each $m_i : V \to \mathbb{R}_+$ and $\phi$, a monotone non-decreasing, normalized $\phi(0)=0$ and non-negative concave function, \[f(A) = \sum_{i=1}^N \phi\left(m_i(A)\right)\] is said to be a ``decomposable'' function. 

In machine learning, objects of interest are often represented by featurizing them into a finite-dimensional vector space. If $U = \{u_1, u_2, \ldots, u_{|U|}\}$ indexes the dimensions, then for any example $v \in V$ the modular function $m_U (v) = \left( m_{u_1}(v), m_{u_2}(v), \ldots, m_{u_{|U|}}(v)\right)^T $ represents the features of $v$. $m_u(v)$ can be interpreted in several ways such as the degree of presence of $u-th$ concept in $v$ similar to concept bottleneck models \citep{koh2020concept} or even the count of an n-gram feature \citep{wei2014unsupervised, lin2011class, lin2012learning}. With this nomenclature, we re-define \textit{decomposable} submodular functions as feature-based functions that has the form:
\begin{equation}
\label{eq:feature_based}
f(A) = \sum_{u \in U} w_u \phi_u\left(m_u(A)\right),
\end{equation}
where $w$ represents a non-negative weight for each feature. The gain of adding examples with more of the property
represented by feature $u$ diminishes as the set gets larger due to concave composition and the diminishing returns
of the concave function. Therefore, intuitively the set $X \subseteq V$ that maximizes the feature-based function would comprise examples that produce a diverse set of features according to the features' properties.

One issue with the feature-based function happens when there is redundancy between two features $u_i$ and $u_j$. For instance, if the features are a count of n-grams, and since higher-order n-grams encompass lower-order n-grams, having too
many of a certain high-order n-gram would also mean having many of its lower-order n-grams, leading to redundancy (or correlation) in what the features represent. As \cref{eq:feature_based} suggests, feature-based functions treat each feature independently. This motivates a flexible class of submodular functions, known as \textit{Deep submodular functions} or DSFs~\citep{bilmes2017deep} which mitigates the problem of redundancy by an additional concave composition followed by weighted aggregation which allows for interaction between sets of redundant features as follows:
\begin{equation}
\label{eq:one_layer_dsf}
    f(A)=\sum_{s \in S} \omega_s \phi_s\left(\sum_{u \in U} w_{s, u} \phi_u\left(m_u(A)\right)\right)
\end{equation}


Here, $w_{s, u}$ and $\omega_s$ are non-negative mixture weights. This naturally leads to a representation resembling neural networks when we iteratively apply a concave function followed by linear aggregation, giving rise to Deep Submodular Functions.
$\forall s \in S$, ~\cite{bilmes2017deep} define $\sum_{u \in U} w_{s, u} \phi_u\left(m_u(A)\right)$ as meta-features, and in general one can stack more layers of concave composition followed by affine transformations, leading to meta-meta features, and so on. Let $L$ be the total number of layers in a DSF; each meta-feature at layer $\ell$ of this network is a submodular function, we index them using $U^{(\ell)}$, which means in the example in equation~\eqref{eq:one_layer_dsf} $U = U^{(0)}$ and $S = U^{(1)}$. 

$W^{(\ell)} \in \mathbb{R}_+^{|U^{(\ell)}| \times |U^{(\ell-1)}|}$ be the mixture weights at layer $\ell$, where for the last layer $L$ we have a uniform weight, which means $W^{L} \in \mathbb{R}_+^{|U^{(L)}|}$ Concave functions applied at the $j$-th submodular function of layer $\ell$ is represented by $\phi^{(\ell)}_{j}$. To write concave functions in vector form, we define $\Phi_U: \mathbb{R}^{|U|} \to \mathbb{R}^{|U|}$ such that for any $\textbf{a} \in \mathbb{R}^{|U|}$ we define $ \Phi_U(\textbf{a}) $
\begin{equation}
    \Phi_U(\textbf{a}) \triangleq [\phi_{1}(\textbf{a}_1), \phi_{2}(\textbf{a}_2), \ldots, \phi_{|U|}(\textbf{a}_{|U|})]^T
\end{equation}
With the notation above, for $X \subseteq V$, DSF $f(A)$ is defined as following
\begin{equation}
\label{eq: dsf_expanded}
    f(A) =  W^{(L)} \cdot \Phi_{U^{(L-1)}} \left( W^{(L-1)} \cdot \Phi_{U^{(L-2)}}\left(\ldots W^{(1)} \cdot \Phi_{U^{(0)}}\left(m_{U^{(0)}}(A)\right) \ldots\right)\right)
\end{equation}
or recursively as follows
\begin{equation}
\label{eq: dsf}
    f^L(A) =  W^{(L)} \cdot \Phi_{U^{(L-1)}} \left( f^{L-1}(A)\right) \
\end{equation}
where,
\begin{equation}
\label{eq: dsf_base}
    f^1(A) =  W^{(1)} \cdot \Phi_{U^{(0)}}(m_{U^{(0)}}(A))
\end{equation}
here, ``$\cdot$" denotes matrix multiplication. To maintain submodularity, all weight matrices must be non-negative. A DSF can incorporate skip connections, and in general, a directed acyclic graph topology can be used to represent the DSF architecture with the help of recursive definitions (refer to section 4.1 of \cite{bilmes2017deep}). 

Lastly,~\cite{bilmes2017deep} showed that DSFs strictly generalize feature-based functions, and adding additional layers strictly increases the representation capacity, therefore suggesting the utility of additional layers.

It is important to realize that our present work on DSPNs and
GPC-based learning is not just an incremental improvement to DSFs, but
rather offers a new class of submodular function and an entirely new
way to learn the submodular function. There are two pre-existing
papers on DSFs, this paper~\citep{dolhansky2016deep} from 2016 as well
as a more theoretical paper~\citep{bilmes2017deep} from 2017. The first
paper~\citep{dolhansky2016deep} introduces DSFs and uses a max-margin
based learning algorithm that was introduced by~\cite{lin2012learning}
in their “Submodular Shells” paper but applies that learning algorithm
to DSFs. Max-margin learning of submodular functions requires a
dataset of highly valued subsets according to some unknown submodular
function to be learnt (e.g., each subset can be thought of as a good
summary) and the goal of max-margin learning in this case is to learn
a submodular function so that it scores those subsets highly relative
to other subsets. Max-margin learning does not care at all about the
rest of the valuations of the submodular function (e.g., a max-margin
learnt submodular function is under no obligation to perform well for
problems other than submodular maximization, and is under no
obligations to preserve the rank valuations of a given teacher
submodular function).

Our present paper is entirely different from these two
papers. While our DSPN ``roof'' structure uses the deep concave
function from DSFs, there is no commonality beyond this. In the
present (DSPN) paper, we have introduced a new parametric submodular
function class (DSPNs) that overcomes the only known weakness of DSFs
(namely, DSFs inability to express certain graphic matroid rank
functions, as proven in~\citep{bilmes2017deep}). DSPNs do not have
this weakness thanks to the submodularity-preserving
permutation-invariant aggregation stage (see
Lemma~\ref{lemma:permutation_invariance} and
Corollaries~\ref{corollary:submodular_preservation}
and~\ref{corollary:dspn_family}) which immediate
means that DSPNs can represent any graphic matroid rank function. The
expressivity improvement of DSPNs over DSFs means that it is now an
open theoretical problem if DSPNs can represent all polymatroid
functions or not. Considering the narrow range of submodular functions
that DSFs cannot represent, we believe that it is likely that DSPNs
indeed can represent any polymatroid function, but we do not have a
proof of this at the present time.

We have also, in the present paper, introduced a new learning paradigm
(the peripteral loss) for learning DSPNs that does not suffer from the
issues that the max-margin learning approach suffers from. In essence,
our learning is a form of distillation from an expensive (possibly unoptimizable)
target oracle down to a parametric and computationally efficient student (a DSPN).
That is, with a peripteral loss, the graded comparison signal $\Delta$ can be any
positive/negative value, which could come from a difference in
submodular valuations, or could come from a non-submodular function,
or could come from a human evaluator (we explain this in detail in
Section~\ref{sec:Introduction}). I.e., the learnt
submodular function can, via minimizing the peripteral loss, scale
well for all values of any target function (e.g., low valued sets
should be given a low value by the learnt function, and high valued
set should be given a high value by the learnt function, in order for
the peripteral loss to be small). In the present study, we choose
$\Delta$ as the difference between expensive-to-optimize facility location function
evaluations since it is a widely and successfully used submodular
function but that suffers from computational scalability issues when
used for optimization (e.g., submodular max) over large datasets.

Another critical novel component of our paper over~\citep{dolhansky2016deep,bilmes2017deep} is that, to the best of our
knowledge, our peripteral loss approach is the first to establish a
connection between: (I) contrastive learning approaches in
representation learning (see Section~\ref{sec:relat-contr-learn}); (II) pairwise comparison scaling methods
widely used in RLHF (and LLM learning, as discussed in the paper,
lines 55–81); and (III) submodular learning. We establish connections
between these three approaches while at the same time extending
regular pairwise comparison approaches to the graded pairwise
comparison approach (GPC), which is further novelty. While we do not
study contrastive learning or RLHF in our paper, we believe that a
broad impact of our work is that our peripteral loss for GPC will be
useful for both higher-order contrastive learning methods and for
RLHF, but we leave that to future studies as we focus on the
submodular learning problem in the present paper (and, as we discuss
in our introduction, we believe there is too little attention in the
community paid to practical strategies for identifying a good
submodular functions for a problem).

Lastly, the experimental results in~\citep{dolhansky2016deep} consider
only very small-scale (and entirely offline) summarization experiments
(summarizing 100 images down to 10) or synthetic data experiments. By
contrast, we consider experiments on a much larger scale (130K images)
and apply it towards both offline and online summarization problems in
the context of choosing a good set of training data subset points that
have not been previously explored --- in other words, we instantiate
the resulting DSPNs on entirely held-out data, so not only do DSPNs
generalize to different sets and set sizes of a given ground set $V$
but DSPNs generalize to entirely different ground sets $V'$ (meaning
$V' \cap V = \emptyset$).

So in sum, the present paper offers enormous novelty and difference
over the two previous DSF
papers~\citep{dolhansky2016deep,bilmes2017deep}.

\section{Weighted Matroid Rank Aggregation for DSFs}
\label{sec:weight-matr-rank}

Vanilla DSFs use modular functions (sum operator) for permutation invariant aggregation. However, as we show in \cref{sec:subm-funct-weight} and \cref{def:weighted_matroid_rank}, given a matroid $\mathcal{M} = (V, \mathcal{I})$ a modular function is indeed just a special case of a weighted matroid rank function. Moreover, another common permutation invariant aggregation technique - max-pooling is yet another special case of the weighted matroid rank function. Therefore, with the same notations as in \cref{sec:prim-deep-subm} we define a DSF via a weighted matroid rank as follows: 
\begin{equation}
\label{eq: dsf_expanded_weighted_matroid_rank}
    f_{\mathcal{M}}(A) =  W^{(L)} \cdot \Phi_{U^{(L-1)}} \left( W^{(L-1)} \cdot \Phi_{U^{(L-2)}}\left(\ldots W^{(1)} \cdot \Phi_{U^{(0)}}\left( \vec{\max}_{\substack{I \in \mathcal{I}}} m_{U^{(0)}}(A \cap I) \right) \ldots\right)\right)
\end{equation}
where define $\vec{\max}$ as applying maximum operator to every index as follows:
\begin{equation}
    \vec{\max}_{\substack{I \in \mathcal{I}}} m_{U^{(0)}}(A \cap I) = \left(\max_{\substack{I \in \mathcal{I}}} m_{u}(A \cap I)  \mid u  \in U^{(0)}\right)^T
\end{equation}

In \cref{theorem:submodularity_of_weighted_matroid_rank_composition} we show that $f_{\mathcal{M}}$ is a permutation invariant submodular set function. To show that $f_{\mathcal{M}}$ is submodular we first state some properties that will be useful. We re-visit the definitions from ~\cite{bilmes2017deep} for \textit{superdifferential} and \textit{antitone superdifferential} of a concave function which we will use to provide sufficient conditions that will help to show $f$ is submodular.

\begin{definition}[\textit{Superdifferential}]
    Let $\phi : \mathbb{R}^d \to \mathbb{R}$ be a concave function; a superdifferential of a concave function is a set defined as follows 
    \begin{equation*}
        \partial \phi(x)=\left\{s \in \mathbb{R}^n: f(y)-f(x) \leq\langle s, y-x\rangle, \forall y \in \mathbb{R}^n\right\}
    \end{equation*}
\end{definition}

\begin{definition}[\textit{Antitone Superdifferential}]
    A concave function $\phi : \mathbb{R}^d \to \mathbb{R}$ is said to have antitone superdifferential if for every $x \leq y$ (where ordering is defined for each dimension) we have  a superdifferential $h_x \geq h_y$ for all $h_x \in \partial \psi(x)$ and for all $h_y  \in \partial \psi(y)$.
\end{definition}

Using the definitions above, we have the following -  

\begin{lemma}
\label{lemma:antitone superdifferential}
    Given for every layer $\ell$ in a DSF the list of monotone non-decreasing normalized concave functions  $ \Phi_{U^{\ell}} = (\phi_u \mid u \in U^{\ell})$, $W$ be matrix of non-negative mixture weights.  Define $\psi: \mathbb{R}^k \to \mathbb{R}$ as follows 
\begin{equation}
        \psi(x) \triangleq  W^{(L)} \cdot \Phi_{U^{(L-1)}} \left( W^{(L-1)} \cdot \Phi_{U^{(L-2)}}\left(\ldots W^{(1)} \cdot \Phi_{U^{(0)}}\left( \vec{x} \right) \ldots\right)\right)
\end{equation}
Then $\psi$ is concave and has antitone superdifferential.
\end{lemma}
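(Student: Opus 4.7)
The plan is to prove both claims simultaneously by induction on the number of layers $L$, while carrying along the auxiliary invariant that $\psi$ is also monotone non-decreasing and that all supergradients are coordinatewise non-negative. For the base case $L=1$, we have $\psi(x) = W^{(1)} \cdot \Phi_{U^{(0)}}(x) = \sum_{u \in U^{(0)}} w^{(1)}_u \phi_u(x_u)$. Since each scalar $\phi_u$ is concave and the weights are non-negative, $\psi$ is a non-negative combination of concave functions and hence concave; it is also non-decreasing since each $\phi_u$ is. The superdifferential decomposes coordinatewise as $\partial \psi(x) = \{(w^{(1)}_u s_u)_u : s_u \in \partial \phi_u(x_u)\}$. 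Because each $\phi_u$ is scalar concave and non-decreasing, $\partial \phi_u$ is a non-increasing, non-negative set-valued map on $\mathbb{R}$, so for $x \le y$ every element of $\partial \psi(x)$ dominates every element of $\partial \psi(y)$ coordinatewise, which is exactly the antitone property.

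For the inductive step, assume the result holds after $L-1$ layers, so that the inner map $\tilde\psi : \mathbb{R}^{|U^{(0)}|} \to \mathbb{R}^{|U^{(L-1)}|}$ obtained from the first $L-1$ layers is coordinatewise concave, monotone non-decreasing, and has antitone (and non-negative) superdifferentials. Then
\begin{equation*}
\psi(x) = W^{(L)} \cdot \Phi_{U^{(L-1)}}(\tilde\psi(x)) = \sum_{u \in U^{(L-1)}} w^{(L)}_u \, \phi_u(\tilde\psi_u(x)).
\end{equation*}
Concavity of each summand follows from the standard composition rule: $\phi_u$ is scalar concave and non-decreasing, $\tilde\psi_u$ is concave, so $\phi_u \circ \tilde\psi_u$ is concave. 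Taking a non-negative combination with weights $w^{(L)}_u \ge 0$ preserves concavity, establishing concavity of $\psi$. Monotonicity of $\psi$ is preserved by the same argument, since compositions and non-negative combinations of non-decreasing maps are non-decreasing.

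For the antitone superdifferential at the inductive step, apply the chain rule for concave functions: any element of $\partial \psi(x)$ can be written as $\sum_u w^{(L)}_u\, t_u\, s^{(u)}_x$ where $t_u \in \partial \phi_u(\tilde\psi_u(x))$ and $s^{(u)}_x \in \partial \tilde\psi_u(x)$, with both factors non-negative. Take any $x \le y$ and compare to $\sum_u w^{(L)}_u\, t'_u\, s^{(u)}_y$ with $t'_u \in \partial\phi_u(\tilde\psi_u(y))$ and $s^{(u)}_y \in \partial\tilde\psi_u(y)$. By monotonicity of $\tilde\psi_u$, $\tilde\psi_u(x) \le \tilde\psi_u(y)$, and by scalar concavity of $\phi_u$ the superdifferential is non-increasing, giving $t_u \ge t'_u \ge 0$. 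By the inductive hypothesis, $s^{(u)}_x \ge s^{(u)}_y$ coordinatewise with all entries non-negative. Hence the coordinatewise products $t_u\, s^{(u)}_x \ge t'_u\, s^{(u)}_y$, and since $w^{(L)}_u \ge 0$, the full sum at $x$ dominates the one at $y$, proving antitonicity.

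The main obstacle is precisely this last chain-rule argument: one must show dominance for arbitrary pairs of supergradients (not merely existence of dominating pairs), which requires carrying the non-negativity invariant through the induction so that multiplying antitone, non-negative quantities yields antitone, non-negative quantities. The monotonicity of $\tilde\psi$ is essential, since it ensures the argument fed into $\phi'_u$ moves in the same direction as $x \to y$; without it, $\phi'_u$ at $\tilde\psi_u(x)$ and $\tilde\psi_u(y)$ would not be comparable. Once this nonnegativity-plus-monotonicity invariant is threaded through each layer, the rest of the proof is a bookkeeping argument over a single layer.
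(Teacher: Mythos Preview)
Your layer-wise induction, carrying along monotonicity and non-negativity of supergradients as auxiliary invariants, is correct and is exactly what the paper points to: its own proof simply defers to Lemma~5.13, Corollary~5.13.1, and Lemma~5.14 of \citet{bilmes2017deep}, which establish the claim by the same inductive argument. The only point worth tightening is that your chain-rule step needs the equality form (\emph{every} element of $\partial\psi(x)$ factors as $\sum_u w^{(L)}_u t_u\, s^{(u)}_x$, not just some), which holds here because each scalar concave $\phi_u$ is finite on the full range of $\tilde\psi_u$.
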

\begin{proof}
    The proof can be done following the similar steps as in Lemma~5.13, Corollary~5.13.1 and Lemma~5.14 from \cite{bilmes2017deep}.
\end{proof}

We now re-state Theorem~5.12 in~\cite{bilmes2017deep} as the follows.

\begin{lemma}
\label{theorem:submodularity_of_antitone_composition_on_polymatroid}
    Let $\psi:\mathbb{R}^k \to \mathbb{R}$ be a monotone non-decreasing concave function and let $\vec{g(A)} \triangleq (g_1(A), g_2(A) \ldots g_k(A)) $ be a vector of monotone non-decreasing and normalized submodular (polymatroid) functions. Then if $\psi$ has an antitone super differential, then the set function $f: 2^V \to \mathbb{R}$ defined as $f(A) \triangleq \psi(\vec{g}(A))$  is submodular for every $A \subseteq V$.
\end{lemma}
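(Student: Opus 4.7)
The plan is to verify submodularity via the diminishing-returns inequality: for any $A \subseteq B \subseteq V$ and $v \in V \setminus B$, I will show $f(A \cup \{v\}) - f(A) \geq f(B \cup \{v\}) - f(B)$. Abbreviating $a = \vec{g}(A)$, $a' = \vec{g}(A \cup \{v\})$, $b = \vec{g}(B)$, $b' = \vec{g}(B \cup \{v\})$, this reduces to proving $\psi(a') - \psi(a) \geq \psi(b') - \psi(b)$ using only three facts: $\psi$ is monotone, concave, and has an antitone superdifferential, while each $g_i$ is a polymatroid function.

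From the polymatroid hypothesis I will extract two coordinate-wise vector inequalities: (i) monotonicity of each $g_i$ gives $0 \leq a \leq b$, $a \leq a'$, $b \leq b'$; and (ii) componentwise submodularity of each $g_i$ gives $d := a' - a \geq b' - b =: e \geq 0$. The strategy is then a two-step comparison $\psi(a+d) - \psi(a) \geq \psi(a+e) - \psi(a) \geq \psi(b+e) - \psi(b)$. The left inequality is immediate from monotonicity of $\psi$ (since $a + d \geq a + e$). The right inequality is the heart of the argument and is where antitonicity of the superdifferential enters.

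To prove $\psi(a+e) - \psi(a) \geq \psi(b+e) - \psi(b)$ when $a \leq b$ and $e \geq 0$, I parameterize the segment $\gamma(t) = (1-t)\,a + t\,b$ for $t \in [0,1]$ and study $\Phi(t) := \psi(\gamma(t) + e) - \psi(\gamma(t))$; the goal becomes $\Phi(0) \geq \Phi(1)$. Choosing measurable selections $s(t) \in \partial \psi(\gamma(t))$ and $s^+(t) \in \partial \psi(\gamma(t) + e)$, a supergradient chain-rule computation yields $\Phi'(t) = \langle s^+(t) - s(t),\, b - a \rangle$ almost everywhere. Since $\gamma(t) + e \geq \gamma(t)$ componentwise, the antitone property of $\partial \psi$ forces $s^+(t) \leq s(t)$ componentwise; combined with $b - a \geq 0$, this makes the inner product non-positive, so $\Phi$ is non-increasing and $\Phi(0) \geq \Phi(1)$, which together with the monotonicity step gives submodularity.

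The main obstacle is making the chain-rule identity rigorous in the non-smooth setting, since $\partial \psi$ is set-valued and $\Phi'(t)$ must be read either as a one-sided directional derivative or as a Lebesgue-a.e.\ statement. The standard workaround is to exploit the fact that a finite concave function on $\mathbb{R}^k$ is differentiable almost everywhere, apply the smooth argument on a full-measure subset of $[0,1]$, and recover $\Phi(0) - \Phi(1) = -\int_0^1 \Phi'(t)\,dt \geq 0$ from the absolute continuity of $\psi$ along the segment. Monotonicity and normalization of $\psi$ together with those of each $g_i$ then ensure that $f$ is not merely submodular but in fact itself a normalized monotone non-decreasing (polymatroid) set function.
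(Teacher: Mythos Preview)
Your argument is correct. The paper does not actually supply a proof of this lemma; it simply states that ``the proof is the same as for Theorem~5.12 in \citet{bilmes2017deep},'' so there is no in-paper argument to compare against. Your two-step reduction---first using monotonicity of $\psi$ to replace the increment $d=a'-a$ by the smaller $e=b'-b$, then using antitonicity of $\partial\psi$ along the segment from $a$ to $b$ to show $\Phi(t)=\psi(\gamma(t)+e)-\psi(\gamma(t))$ is nonincreasing---is the natural route and is in the spirit of the cited result.

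One small technical refinement on your non-smooth workaround: the fact that a finite concave $\psi$ on $\mathbb{R}^k$ is differentiable Lebesgue-a.e.\ does not by itself guarantee differentiability a.e.\ along a \emph{fixed} one-dimensional segment (the segment could sit inside the exceptional null set). The clean justification is instead that each of the one-variable maps $t\mapsto\psi(\gamma(t))$ and $t\mapsto\psi(\gamma(t)+e)$ is concave on $[0,1]$, hence differentiable at all but countably many $t$; at any such $t$ the directional derivative equals $\langle s,\,b-a\rangle$ for \emph{every} $s$ in the respective superdifferential, so your identity $\Phi'(t)=\langle s^+(t)-s(t),\,b-a\rangle$ holds for any selection, and antitonicity together with $b-a\geq 0$ gives $\Phi'(t)\leq 0$. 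Absolute continuity (from local Lipschitzness of $\psi$) then yields $\Phi(0)\geq\Phi(1)$ as you want.
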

\begin{proof}
    The proof is the same as for Theorem~5.12 in \cite{bilmes2017deep}.
\end{proof}

Using the theorem above, we have the following corollary that shows the submodularity of the Deep Submodular Function defined using the weighted matroid rank ($f_{\mathcal{M}}(A)$) as the aggregation function. 

\begin{theorem}[Submodularity of $f_{\mathcal{M}}$]
\label{theorem:submodularity_of_weighted_matroid_rank_composition}
    Given a matroid $\mathcal{M} = (V, \mathcal{I})$,  $\vec{g(A)} \triangleq \left(\max_{\substack{I \in \mathcal{I}}} m_{u}(A \cap I)  \mid u  \in U^{(0)}\right)^T$ and $\psi$ as in \cref{lemma:antitone superdifferential}, $f_{\mathcal{M}}(A) = \psi(\vec{g}(A))$ is submodular $\forall$ $A \subseteq V$. Moreover, $f_{\mathcal{M}}(A)$ is permutation invariant on the elements of set $A$.
\end{theorem}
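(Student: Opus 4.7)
The plan is to decompose the statement into two independent parts: (i) submodularity of $f_{\mathcal{M}}$, and (ii) permutation invariance of $f_{\mathcal{M}}$, and then assemble each from results already established earlier in the excerpt so that very little new work is required.

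For submodularity, I would first argue that every coordinate of $\vec g(A)$ is a polymatroid function. Concretely, for each $u \in U^{(0)}$ the map $A \mapsto \max_{I \in \mathcal I} m_u(A \cap I)$ is exactly the weighted matroid rank function $\text{rank}_{\mathcal M, m_u}(A)$ from Definition~\ref{def:weighted_matroid_rank}, where $m_u$ is the non-negative modular function given by the pillar output along coordinate $u$. By the discussion following Definition~\ref{def:weighted_matroid_rank}, such weighted matroid rank functions are submodular, monotone non-decreasing, and normalized (they vanish on $\emptyset$ and take values in $\reals_+$), hence polymatroid. This verifies the hypothesis of Lemma~\ref{theorem:submodularity_of_antitone_composition_on_polymatroid} on the inner vector $\vec g$.

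Next, I would invoke Lemma~\ref{lemma:antitone superdifferential} to conclude that $\psi$ is monotone non-decreasing, concave, and possesses an antitone superdifferential; this follows because the inner concave activations $\phi^{(\ell)}_j$ are monotone non-decreasing normalized concave functions, and all mixture matrices $W^{(\ell)}$ have non-negative entries, which preserves antitonicity layer by layer. With both ingredients in place, a direct application of Lemma~\ref{theorem:submodularity_of_antitone_composition_on_polymatroid} to the composition $f_{\mathcal M}(A) = \psi(\vec g(A))$ yields submodularity of $f_{\mathcal M}$ on every $A \subseteq V$.

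For permutation invariance, I would observe that each coordinate $g_u(A) = \text{rank}_{\mathcal M, m_u}(A)$ is permutation invariant by Lemma~\ref{lemma:permutation_invariance} (equivalently by Proposition~\ref{sec:subm-funct-perm-invariant}, since each $g_u$ is itself submodular). Because any reordering of the elements of $A$ leaves the entire vector $\vec g(A)$ unchanged, and because $\psi$ acts only on this vector, the composed value $\psi(\vec g(A))$ is invariant under any bijection $\pi: A \to A$, giving the claimed permutation invariance.

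The main obstacle, such as there is one, is purely bookkeeping: one must carefully track that the non-negativity of pillar outputs and mixture weights is what makes each $m_u$ a valid non-negative modular function (so that Definition~\ref{def:weighted_matroid_rank} applies) and that the same non-negativity is what propagates antitonicity of the superdifferentials through the nested DSF layers in Lemma~\ref{lemma:antitone superdifferential}. Once those two non-negativity chains are in hand, everything else reduces to a one-line application of Lemma~\ref{theorem:submodularity_of_antitone_composition_on_polymatroid} plus Lemma~\ref{lemma:permutation_invariance}.
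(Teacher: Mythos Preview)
Your proposal is correct and follows essentially the same approach as the paper: apply Lemma~\ref{lemma:antitone superdifferential} together with Lemma~\ref{theorem:submodularity_of_antitone_composition_on_polymatroid}, using that weighted matroid rank functions are polymatroid, and deduce permutation invariance from Lemma~\ref{lemma:permutation_invariance}. Your write-up simply spells out the bookkeeping that the paper leaves implicit.
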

\begin{proof}
    This is an application of \cref{lemma:antitone superdifferential} into \cref{theorem:submodularity_of_antitone_composition_on_polymatroid} using the knowledge that weighed matroid rank are polymatroid functions. Permutation invariance of $f_{\mathcal{M}}$ it can be inferred from the permutation invariance of weighted matroid rank function from \cref{lemma:permutation_invariance}
\end{proof}

This leads us to the fact that any DSPN, assuming that $\parmschar=(\pillarparms,\roofparms)$ are
valid (specifically, that $\roofparms \geq 0$) is submodular.

\dspnsubmodular*
\begin{proof}
  The output of each pillar is a $\embeddingdim$-dimensional non-negative vector. The aggregation via any weighted
  matroid rank of the pillar outputs for any set $A \subseteq V$ is also a $\embeddingdim$-dimensional non-negative vector
  and moreover, each element of this resulting aggregated vector is itself a polymatroid function
  thanks to the fact that a weighted matroid rank function is a polymatroid function. When we
  compose a vector of polymatroid functions with a DSF, this preserves polymatroidality
  by Theorem~5.12 in~\cite{bilmes2017deep}, and thus submodularity. The theorem is hence proven.
\end{proof}

\wmatroidranksubmodularpreservation*
\begin{proof}
This follows immediately from Theorem~\ref{theorem:dspn_submodular}.
\end{proof}

\section{Further discussion on (E,M) pair sampling and dataset construction strategies}
\label{sec:further-discussion-pair-sampling}

Training a DSPN via the peripteral loss requires transferring
information from an expensive oracle target function to a learnt
DSPN. As mentioned in~\S \cref{sec:what-subsets-train}, this requires a
dataset $\mathcal{D} = \{(E_i, M_i)\}_{i=1}^N$ of pairs of sets to be
contrasted with each other where each $E_i,M_i \subseteq V$.  The
target function in our work is an expensive facility location
function, but in general it can be any oracle to provide information
in a GPC context to a learner. This means that
$\shortoraclescore{E}{M} = f_\targetsubscript(E) -
f_\targetsubscript(M)$.  As submodular functions are set functions
that tend to produce higher score for a diverse set and a low score
for a homogeneous set, the oracle evaluation will be positive if $E$
is considered more diverse than $M$ and will be negative if $M$ is
considered more diverse than $E$. For this reason, even if we design
sampling strategies that attempt to produce pairs $E,M$ where $E$ is
more diverse than $M$, the peripteral loss
Eqn.~\eqref{eq:gated_parameterized_peripteral_loss} allows for
mistakes to be made in the selection of these pairs since the sign of
the oracle GPC can switch within the peripheral loss. Hence, our
approach is robust to $E,M$ pair label noise in this sense.  This also
means that our $E,M$ sampling strategy can be done in an automated
manner.

Moreover, a set $E$ might be seen as diverse in some cases and
homogeneous in other cases. For instance, a set $E$ containing
elements from two distinct classes is heterogeneous compared to a set
$M$ containing all elements from one class.  However, the same set $E$
is homogeneous when compared against a set $E'$ containing all
distinct classes, so in this latter case the $(E,M)$ pair would be
$(E',E)$. The oracle query offers a GPC of the pair $(E,M)$ providing
a score $\oraclescore{A}{B} \in \mathbb R$ depending on which of $E$
and $M$ is more diverse. The pairwise comparison aspect of GPC is one
of its powers since we are not asking for absolute scores from the
oracle but for comparisons between pairs. This is one of the benefits
of pairwise comparison models when querying humans as well (as
discussed in the field of
psychometrics~\citep{nunnally1994psychometric} and this benefit extends
to the GPC case as well~\citep{Lingel2022}.

For learning a DSPN, an  oracle evaluation will require evaluations with an
expensive FL function, but in general the oracle need not be
submodular at all, and can even be a non-submodular arbitrary dispersion
function. Furthermore, as mentioned above, our framework only requires query
access to the oracle, we do not need to optimize over it.  When
the target is non-submodular and we are learning a DSPN,
our approach can be seen as finding a projection of
a non-submodular function onto the space of DSPNs. A compelling
example of this is in the context of dataset pruning, where a target function
could be a mapping from pairs of subsets of training data to
the difference in validation accuracies that
a deep model achieves after being trained on each of the subsets and
evaluated on a validation set (we do not test this case in this paper however, but
plan to do so in future work). An alternate
target function could capture a human's assessment of diversity of a
summary, incorporating human feedback into the process of learning a
submodular function. Crucially, in both of these examples, the target
functions are not directly optimized but are only queried. Our framework may allow us
to distill them onto easy-to-optimize functions such as DSPNs that can
still exhibit essential and relevant properties of the target.

In general, how the $(E,M)$ pair of subsets are chosen in the learning
procedure is absolutely critical, and this is why we have introduced a
suite of novel automatic sampling strategies. One of the strategies
involves passive sampling of sets via random subsetting and
clustering.  The other strategy is a new ``submodular feedback''
method that can be seen as combinatorial active learning since it
optimizes over the DSPN as it is being learnt to produce the $(E,M)$
sets, something greatly facilitates learning.

\section{Analysis and Further Description of Peripteral Loss}
\label{sec:appendix_peripteral_loss_discussion}

In addition to the motivation for our loss function provided in \cref{sec:peripteral-loss} here we provide a discussion on its gradient and curvature and how the hyperparameters govern the curvature of the provided loss. First, we re-state the \textit{mother} loss, plot its value for different values of margin, and then follow by equation for its gradient and double-derivative. In the below $\sigma(x)$ refers to sigmoid function, that is,  $\sigma(x) = \frac{1}{1 + e^{-x}}$ and $\sgn(x)$ refers to the sign of $x$. 

The mother loss has the form
\begin{equation}
  \mathcal L_\oracled(\delta) = |\oracled| \log(1+\exp(1-\frac{\delta}{\oracled})),
\end{equation}
and the corresponding gradient becomes
\begin{equation}
\label{appendix:mother_gradient}
  \frac{\partial \mathcal L_\oracled(\delta)}{\partial \delta} = - \sgn(\Delta) \sigma\left(1-\frac{\delta}{\oracled}\right).
\end{equation}

We first plot the loss and the gradients in Fig.~\ref{fig:plot_peripteral_loss}. From the plots in Fig.~\ref{fig:plot_peripteral_loss} and gradient in Fig.~\ref{appendix:mother_gradient}, it is clear that if $\oracled$ is (very) positive, then $\learnerd$ must also be (very) positive to produce a small loss.  If $\oracled$ is (very) negative then so also must $\learnerd$ be to produce a small loss.  The absolute value of $\oracled$ determines how non-smooth loss is so we include an outer pre-multiplication by $|\oracled|$ ensures gradient of loss, as a function of $\learnerd$, asymptotes to negative one (or one when $\oracled$ is negative) in the penalty region, analogous to hinge loss, and thus produces numerically stable gradients. 

Having established the ``mother'' loss $\mathcal L_\oracled(\learnerd)$ we can see that $\oracled$ controls its curvature (the smaller the $|\oracled|$ the higher the curvature).
Still, we
augment the mother loss with additional hyperparameters in way that produces 
smoother and more stable and convergent optimization. Firstly, rather
than a requiring a margin of one, we introduce a hyperparameter $\tau$
to express this margin whose default value is $\tau = 1$. Second,
since the units of the oracle target might be different than the units
of the learner, we introduce a unit adjustment parameter $\kappa$ that
scales $\oracled$. We also introduce an anti-smoothness parameter $\beta > 0$ (smaller $\beta$ means smoother loss) to
further optionally smooth the transition between the linearly
increasing loss region and the zero-loss region of $\learnerd$.
That is, since in general the margin could be different from sample to sample,
the $\beta$ hyperparameter provides global control over the curvature of the loss in a way that produces smoother,
more stable, and convergent optimization.
Lastly,
if $\oracled$ gets small, to avoid finite-precision numerical problems,
we add a small signed value $\epsilon\sgn(\oracled)$ for
$\varepsilon \geq 0$ which slightly pushes the denominator away from zero
depending on the sign of $\oracled$. This yields $\mathcal L_{\oracled;\beta,\tau,\kappa,\varepsilon}(\learnerd)$
which is defined as:
\begin{equation}
\mathcal L_{\oracled;\beta,\tau,\kappa,\varepsilon}(\learnerd) \triangleq
  \frac{|\kappa \oracled + \varepsilon\sgn{\oracled}|}{\beta} \log(1+\exp(\beta(\tau-\frac{\learnerd}{\kappa \oracled + \varepsilon\sgn{\oracled}})))
  \label{eq:parameterized_peripteral_loss}
\end{equation}

There is one last issue with the above we must address, namely
when $\oracled$ becomes extremely small or zero. This is perfectly
reasonable since it expresses the oracle's indifference
to $E$ vs.\ $M$. We address this with a $\tanh$-based ``gating'' function
that detects when $\oracled$ is very small and gates towards 
a secondary loss that penalizes the absolute value of $\learnerd$.
The final peripteral loss $\mathcal L_{\oracled;\alpha,\beta,\tau,\kappa,\varepsilon}(\learnerd)$
thus has the form:
\begin{equation}
\mathcal L_{\oracled;\alpha,\beta,\tau,\kappa,\varepsilon}(\learnerd) \triangleq
\tanh\Bigl(\frac{\alpha}{|\kappa \oracled|}\Bigr) |\learnerd|
 + \biggl(1-\tanh\Bigl(\frac{\alpha}{|\kappa \oracled|}\Bigr)\biggr) \mathcal L_{\oracled;\beta,\tau,\kappa,\varepsilon}(\learnerd)
  \label{eq:gated_parameterized_peripteral_loss}
\end{equation}
The rate of the gating is determined by $\alpha \geq 0$, and since $\frac{\alpha}{|\kappa \oracled|} \geq 0$,
  the gate switches to the absolute value $|\learnerd|$ whenever $\oracled$ gets very small. Thus
  if the oracle target is indifferent to $E$ vs.\ $M$ this encourages the learner
  also to be indifferent.
  Despite these hyperparameters, the loss gradient is
  still independent of the margin and $\beta$
  making it useful for fine-tuning a learning rate

Considering only following term (which does not have the gating term),
\begin{equation}
  L_{\oracled;\beta,\tau,\kappa,\varepsilon}(\learnerd) = \frac{|\kappa \oracled + \varepsilon\sgn{\oracled}|}{\beta} \log(1+\exp(\beta(\tau-\frac{\learnerd}{\kappa \oracled + \varepsilon\sgn{\oracled}})))
\end{equation}
the corresponding gradient will be:
\begin{equation}
  \frac{\partial L_{\oracled;\beta,\tau,\kappa,\varepsilon}(\learnerd)}{\partial \learnerd} = - \sgn(\oracled) \sigma\left(\beta \left(\tau-\frac{\learnerd}{\kappa \oracled + \varepsilon\sgn{\oracled}}\right)\right).
\end{equation}
We plot the gradients in \cref{fig:plot_better_peripteral_loss_gradient} for two offsets, namely $\tau=1$ and $\tau=5$. From these plots, it is clear that $\beta$ provides control over the smoothness of $L_{\oracled;\beta,\tau,\kappa,\varepsilon}(\learnerd)$ where the smaller the $\beta$ the smaller the curvature (without impacting the gradient magnitudes). The importance of low curvature of the loss objective has been studied from the optimization stability \citep{gilmer2021loss} as well as from the lens of robustness in deep learning \citep{srinivas2022efficient, foret2021sharpnessaware}.  Similarly, we can also see that $\tau$ merely provides an offset in the gradient, which, however, can nevertheless be important for the gradient magnitude while we are optimizing. 

Bringing back in the gating term (again, to account for corner cases where $\oracled$ is very small),
we have 

\begin{equation}
L_{\oracled; \alpha, \beta,\tau,\kappa,\varepsilon}(\learnerd) = \tanh\left(\frac{\alpha}{|\kappa \oracled|}\right)) |\learnerd|
 + (1-\tanh\left(\frac{\alpha}{|\kappa \oracled|}\right)) \mathcal L_{\oracled;\beta,\tau,\kappa,\varepsilon}(\learnerd),
\end{equation}
and the corresponding gradient will be:
\begin{equation}
\frac{\partial L_{\oracled; \alpha, \beta,\tau,\kappa,\varepsilon}(\learnerd)}{\partial \learnerd} = \tanh\left(\frac{\alpha}{|\kappa \oracled|}\right)) \sgn(\learnerd) 
 + (1-\tanh\left(\frac{\alpha}{|\kappa \oracled|}\right))  \frac{\partial L_{\oracled;\beta,\tau,\kappa,\varepsilon}(\learnerd)}{\partial \learnerd}.
\end{equation}
Note that for $\learnerd = 0$ the function is indeed non-differentiable, however, we can have any subgradient between $[-1, 1]$. Similar to PyTorch \citep{paszke2019pytorch} we choose to have the subgradient to be 0, as $|\learnerd|$ is also 0 at that point. In practice, we do not encounter any instabilities due to this, and gate value throughout our experiments remains very small. 

\begin{figure}[hbt!]
    
\begin{subfigure}
\centering
\centerline{\includegraphics[width=0.75\linewidth]{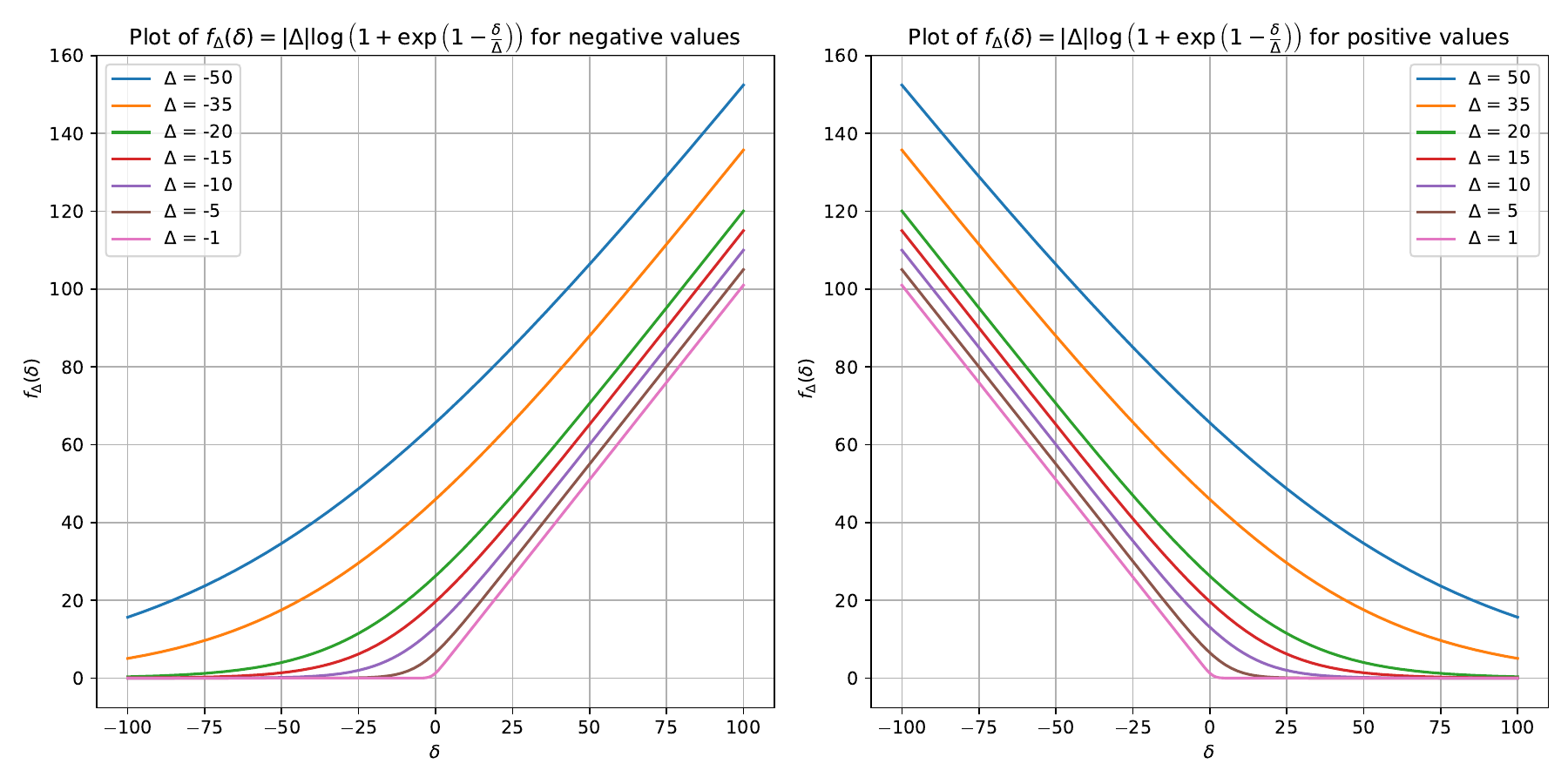}}
    \label{fig:plot_peripteral_loss_loss}
\end{subfigure}
\begin{subfigure}
\centering
\centerline{\includegraphics[width=0.5\linewidth]{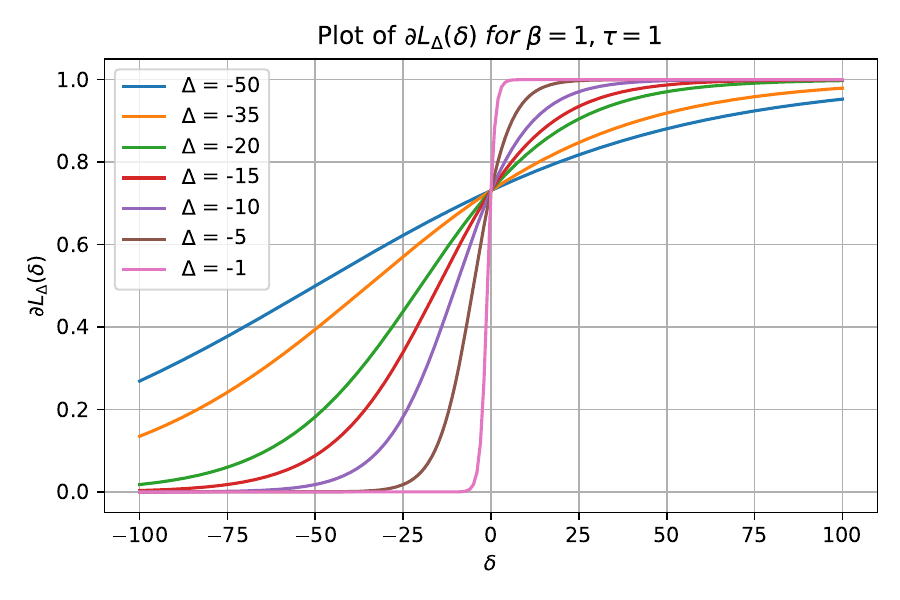}}
    \label{fig:plot_peripteral_loss_gradient}
\end{subfigure}

\setcounter{figure}{6}
\caption{Peripteral loss of $\learnerd$ for different values and sign of margin $\oracled$ and the corresponding gradient.}
\label{fig:plot_peripteral_loss}
\end{figure}

\begin{figure}[htp]
    
\begin{subfigure}
\centering
\centerline{\includegraphics[width=0.75\linewidth]{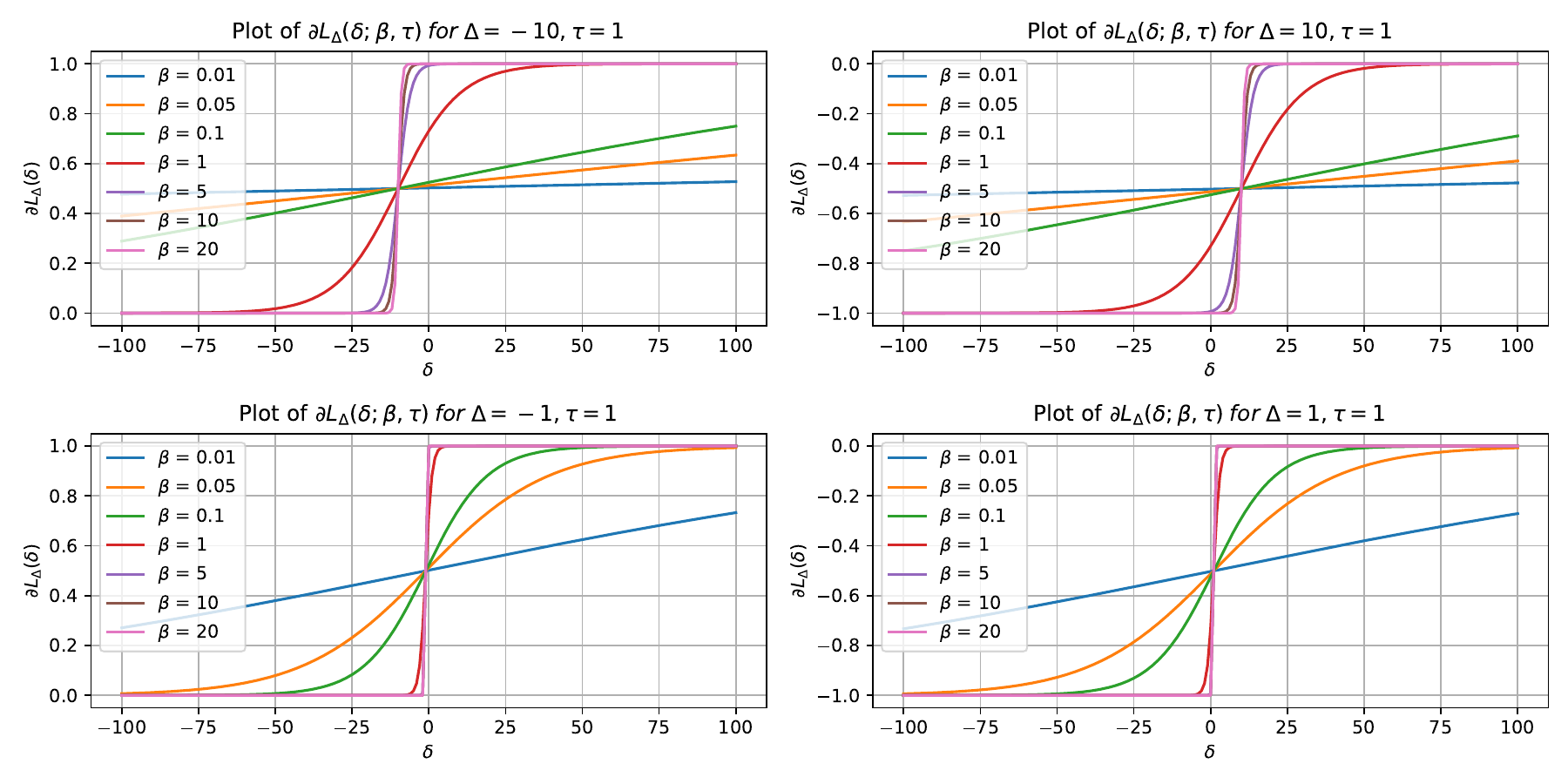}}
    \setcounter{figure}{7}
    \caption{$\tau=1.$}
    \label{fig:plot_better_peripteral_loss_gradient_tau_1}
\end{subfigure}
\begin{subfigure}
\centering
\centerline{\includegraphics[width=0.75\linewidth]{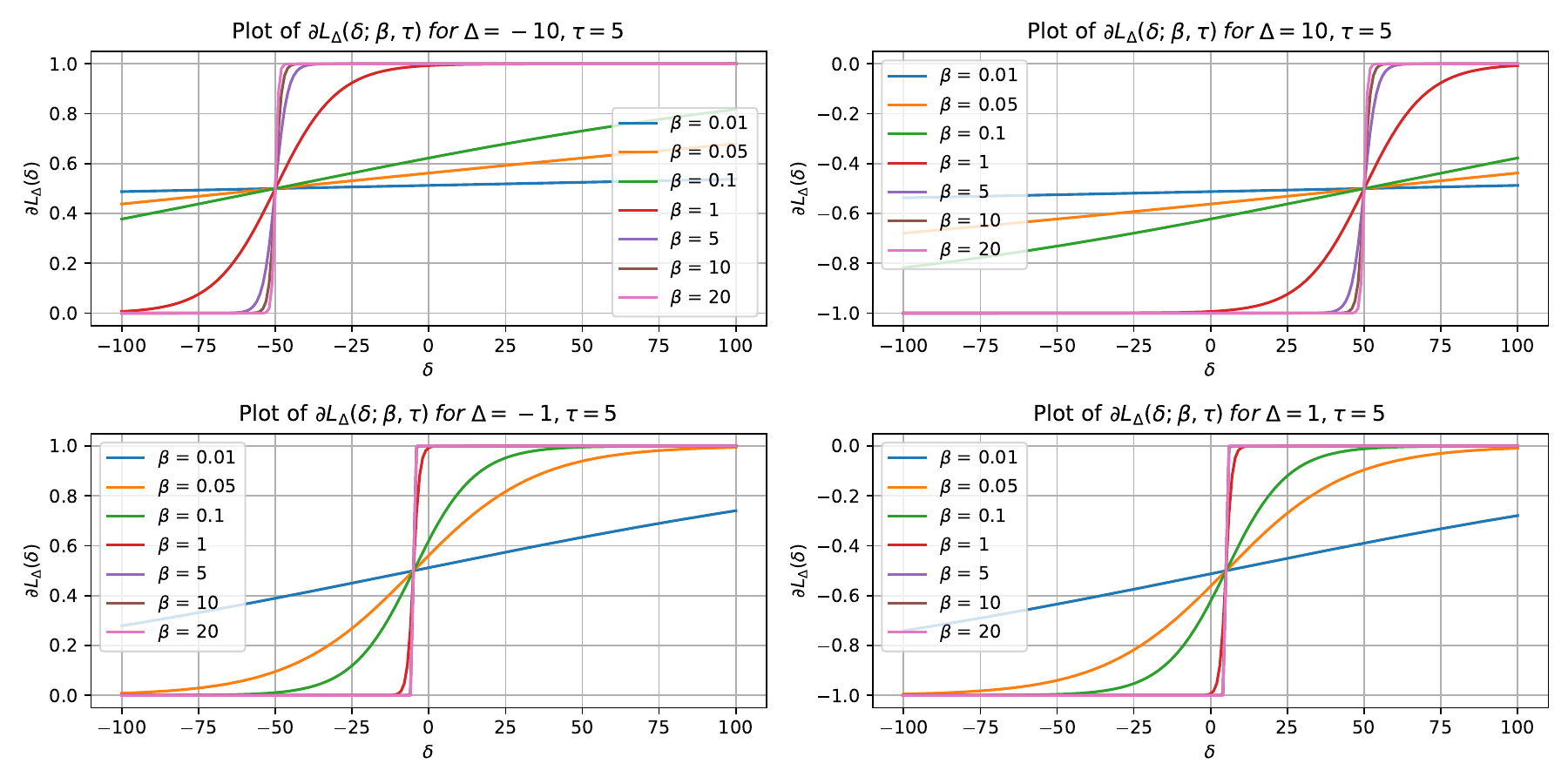}}
    \caption{$\tau=5$}
    \label{fig:plot_better_peripteral_loss_gradient_tau_5}
\end{subfigure}
        
\label{fig:plot_better_peripteral_loss_gradient}
\end{figure}

\subsection*{Final Extended Loss}
\label{sec:final-extended-loss}

Building and extending Section~\ref{sec:final-loss} using the peripteral
loss along with its hyperparameters, and with all of the individual loss components
described from Section~\ref{sec:augm-loss-augm}, the final
total loss
$\mathcal{L}_{\text{tot}}(f_{\parmschar}; \shortoraclescore{E}{M},
\shortlearnerscore{f_\parmschar}{E}{M})$ for an $E,M$ pair becomes:
\begin{align}
\label{eq:finaL_extended_loss}
\mathcal{L}_{\text{tot}}(f_{\parmschar}; \shortoraclescore{E}{M},
\shortlearnerscore{f_\parmschar}{E}{M}) &= \mathcal L_{\shortoraclescore{E}{M};\alpha,\beta,\tau,\kappa,\varepsilon}(\shortlearnerscore{f_\parmschar}{E}{M}) +\mathcal L_{\text{aug}}(f_{\parmschar}; E \cup M, E' \cup M') \notag \\
&+ \mathcal L_{\text{redn}}(f_{\parmschar}; E , E') +\mathcal L_{\text{redn}}(f_{\parmschar}; M , M')
\end{align}
With a dataset $\mathcal{D} = \{(E_i, M_i)\}_{i=1}^N$ of pairs, the total risk is:\looseness-1
\begin{equation}
  \label{eq:extended_risk}
\hat{\mathcal{R}}(\parmschar; \mathcal{D}) = \frac{1}{N}
\sum_{i \in [N]}
\mathcal{L}_{\text{tot}}(f_{\parmschar}; \shortoraclescore{E_i}{M_i},
\shortlearnerscore{f_\parmschar}{E_i}{M_i}).
\end{equation}

\section{Peripteral Networks and Training Flow}
\label{sec:peript-netw-train}

\begin{figure}[htp]
\centerline{\includegraphics[width=0.75\linewidth]{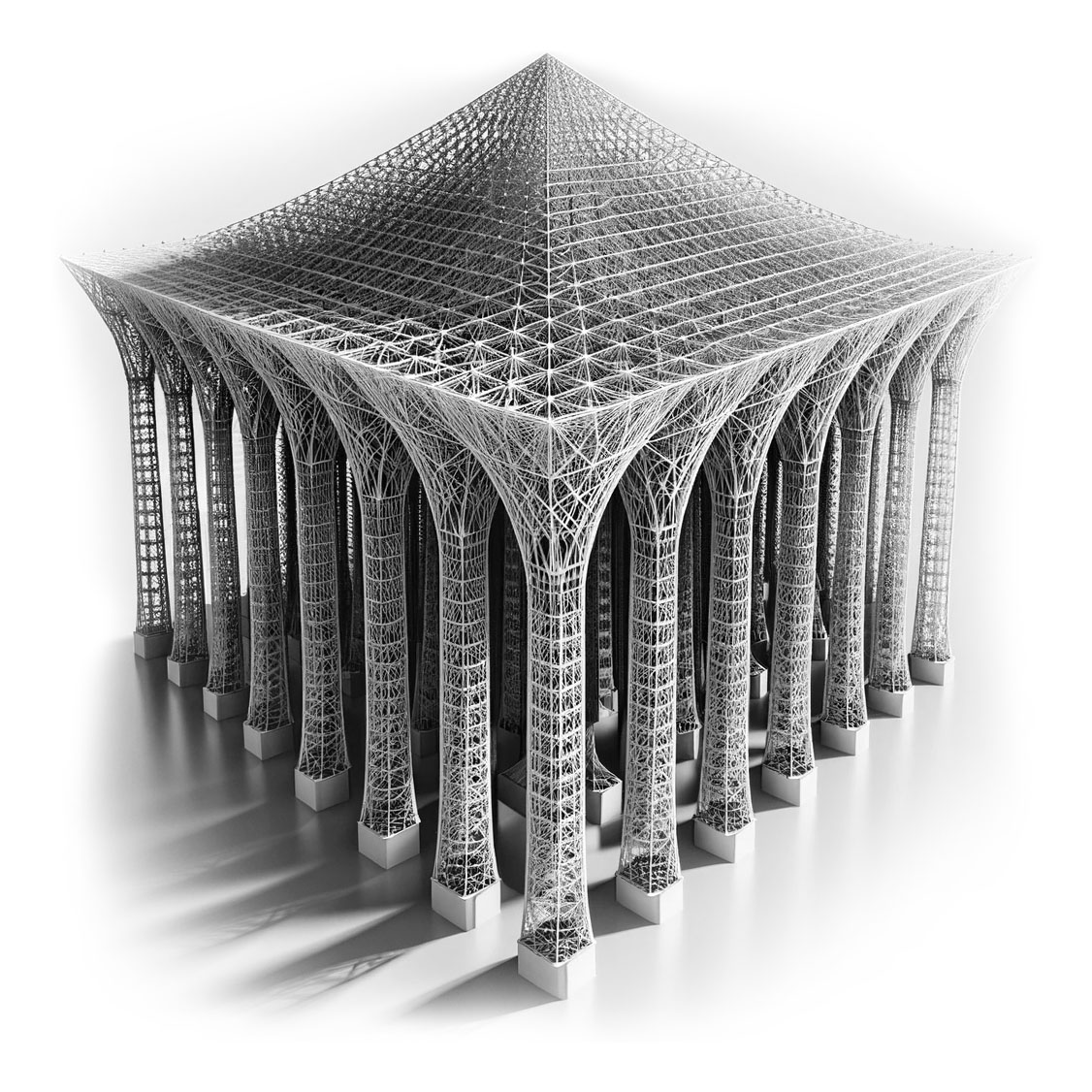}}
\caption{A ``peripteral neural temple'' showing a Greek/Roman style peripteral
  structure with many pillars rendered as a form of neural temple, where each of the
  pillars, the submodular preserving and
  permutation-invariant aggregation, and the roof are all seen as having
  an underlying neural network graphical structure. This figure was generated using GPT-4 after an enormous amount of trial and error including many seed images, and then it was further edited in Adobe Photoshop.}
\label{fig:peripteral_neural_temple}
\end{figure}

Figure~\ref{fig:overall_flow_main} in the main paper depicts the DSPN architecture. In fact,
we see two copies of a DSPN presented in
Figure~\ref{fig:overall_flow_main}, the left one instantiated for the $E$
set and the right one instantiated for the $M$ set.  Each DSPN
consists of a set of pillars which transform a number of objects
(e.g., images corresponding to a set) into a non-negative embedding
space.  The reason for the space to be non-negative is to ensure that
the subsequent transformations can preserve submodularity, in the same
way that a concave composed with a modular function requires the
modular function to be non-negative to preserve submodularity.  The
figure above shows that object space has dimension nine (the number of
red nodes on the bottom of each pillar) and embedding space having
dimension seven (the seven orange nodes at the top of each
pillar). This DSPN has five pillars which means that it is appropriate
for an $E$ or $M$ set of size five. The outputs of the pillars are
aggregated using a submodular-preserving permutation invariant
aggregator (such as a weighted matroid rank function) but in the
experiments in this paper, we use a simple uniform matroid which
corresponds to a summation aggregator.  Note that each scalar output
of each pillar is aggregated separately and after aggregation we get a
resulting seven-dimensional vector which is input to a deep submodular
function (DSF). We consider the aggregation step and the final DSF to
be the ``roof model.'', highlighted with the blue triangle above.  The
word {\bf peripteral} is an adjective (applicable typically to a
building) which means having a single row of pillars on all sides in
the style of the temples of ancient Greece. This is reminiscent
of the above DSPN structures which is the reason for this name (see figure\ref{fig:peripteral_neural_temple}).

To train a DSPN, we need two sets, an $E$ (hEterogeneous) set and an
$M$ (hoMogeneous) set. Each of the $E$ and $M$ sets of the $(E,M)$
pair is presented to a DSPN instantiation being learnt and which is
governed, overall by the parameters $w$. Note that $w$ consists both
of the parameters for the DSPN pillars as well as the parameters for
the DSPN roof. The parameters for all instances of the pillar are
shared over all pillars, similar to how Siamese networks share (or
tied) parameters between the two transformations. Here, however the
number of pillars in the DSPN grows or shrinks depending on the sizes
$|E|$ and $|M|$ of the sets $E$ and $M$ respectively, but regardless
of the size of $|E|$ or $|M|$ the number of the parameters of the
pillars stays fixed. These sets are acquired in a variety of different
ways, and this includes heuristics on the data (e.g., clusterings,
random sets) as well as submodular feedback strategy which optimizes
the DSPN as it is being learnt to ensure that what the DSPN thinks has
high or low value agrees with the target oracle.

\section{Additional Experimental Details}
\label{sec: experimental_details}

\paragraph{Baseline Loss Functions} To assess the efficacy of our peripteral loss, we consider two popular baselines to learn from pairwise comparisons. Note that we only replace our peripteral loss function with the baseline and we do not remove any necessary regularizers such as $\mathcal{L}_{redn}$ and $\mathcal{L}_{aug}$. Using the same notations as the main paper, we define the regression loss $\mathcal{L}_{\text{regr}}$ and $\mathcal{L}_{\text{marg}}$ in equations~\ref{eq:regression} and~\ref{eq:margin}.

\begin{equation}
    \label{eq:regression}
    \mathcal{L}_{\text{regr}}(f_{\parmschar}; \shortoraclescore{E}{M},
\shortlearnerscore{f_\parmschar}{E}{M}) = ( \shortlearnerscore{f_\parmschar}{E}{M} - \shortoraclescore{E}{M})^2
\end{equation}

\begin{equation}
    \label{eq:margin}
    \mathcal{L}_{\text{marg}}(f_{\parmschar}; \shortoraclescore{E}{M},
\shortlearnerscore{f_\parmschar}{E}{M}) = \max (\shortoraclescore{E}{M} - \shortlearnerscore{f_\parmschar}{E}{M}), 0 )
\end{equation}

Given the dataset $\mathcal{D} = \{(E_i, M_i)\}_{i=1}^N$ of pairs, we have the following empirical risks for regression (Eq.~\ref{eq:regr_risk}) and margin (Eq.~\ref{eq:marg_risk}).

\begin{equation}
\label{eq:regr_risk}
\begin{split}
\hat{\mathcal{R}}_{\text{regr}}(\parmschar; \mathcal{D}) = \frac{1}{N}
\sum_{i \in [N]} ( \mathcal{L}_{\text{regr}}(f_{\parmschar}; \shortoraclescore{E_i}{M_i},
\shortlearnerscore{f_\parmschar}{E_i}{M_i}) +\mathcal L_{\text{aug}}(f_{\parmschar}; E \cup M, E' \cup M')  \\ + \mathcal L_{\text{redn}}(f_{\parmschar}; E , E') +\mathcal L_{\text{redn}}(f_{\parmschar}; M , M') 
\end{split}
\end{equation}

\begin{equation}
\label{eq:marg_risk}
\begin{split}
    \hat{\mathcal{R}}_{\text{marg}}(\parmschar; \mathcal{D}) = \frac{1}{N}
\sum_{i \in [N]}  \mathcal{L}_{\text{marg}}(f_{\parmschar}; \shortoraclescore{E}{M},
\shortlearnerscore{f_\parmschar}{E}{M}) +\mathcal L_{\text{aug}}(f_{\parmschar}; E \cup M, E' \cup M') \\ +\mathcal L_{\text{redn}}(f_{\parmschar}; E , E') +\mathcal L_{\text{redn}}(f_{\parmschar}; M , M') 
\end{split}
\end{equation}

\paragraph{Hyperparameters and Sensitivity Analysis} We train every DSPN on 2 NVIDIA-A100 GPUs with a cyclic learning rate scheduler ~\citep{smith2017cyclical}, using Adam optimizer ~\citep{kingma2014adam}. To ensure non-negative parameters in DSF, we perform projected gradient descent, where we apply projection on $\rooffuncparms$ to the non-negative orthant after every gradient descent step.

We list the hyperparameters we use for the peripteral loss in~\cref{table:loss_hyperparameters} and~\cref{table:loss_coefficients}. We observed that $\beta$ and $\tau$ are the most salient hyperparameters since they control the curvature and hinge of the loss respectively. The benefits of smaller loss curvature have been widely studied from optimization stability in deep learning ~\citep{gilmer2021loss, srinivas2022efficient, foret2021sharpnessaware} and our low value of $\beta$ supports previous work.  $\alpha$ parameter controls the $\tanh$ gate, the smaller the value of alpha the more weight is given to the main part of our objective function. Since we did not encounter corner cases where $\oracled$ is very small, neither $\epsilon$ nor $\alpha$ were found to be particularly influential.

\begin{table}[htbp]
\centering
\begin{tabular}{c|c|c|c|c|c}
  & $\alpha$ &  $\beta$ & $\kappa $ & $\tau$ & $\epsilon$ \\ \hline
Imagenette & $1e-5$ & $0.5$ & $1$ & $10$ & $1e-15$ \\ \hline
Imagewoof & $1e-5$ & $0.5$ & $1$ & $10$ & $1e-15$ \\ \hline
CIFAR100 & $1e-5$ & $0.01$ & $1$ & $10$ & $1e-15$\\ \hline
Imagenet100 & $1e-5$ & $0.01$ & $1$ & $10$ & $1e-15$\\ 
\end{tabular}
\caption{Peripteral loss hyperparameters used for each dataset. $\lambda_1$ and $\lambda_2$ correspond to the coefficients used in the augmentation regularizer (Eq.~\ref{eq:aug_reg}, while $\lambda_3$ and $\lambda_4$ are the coefficents in (Eq.~\ref{eq:redn_reg})}
\label{table:loss_hyperparameters}
\end{table}

\begin{table}[htbp]
\centering
\begin{tabular}{c|c|c|c|c}
  & $\lambda_1 $ &  $\lambda_2 $ & $\lambda_3 $ & $\lambda_4$ \\ \hline
Imagenette & $0.25$ & $0.01$ & $0$ & $0$ \\  \hline
Imagewoof & $0.25$ & $0.01$ & $0$ & $0$ \\  \hline
CIFAR100 & $0.25$ & $0.01$ & $1e-5$ & $5e-4$ \\  \hline
Imagenet100 & $0.25$ & $0.01$ & $0$ & $0$ \\ 
\end{tabular}
\caption{Loss coefficients used for each dataset.}
\label{table:loss_coefficients}
\end{table}

We search over $\beta \in \{0.01, 0.1, 0.5\}$ and $\tau \in \{1,5,10\}$ and report the normalized target evaluation for Imagenet100 results in~\Cref{table:sensitivity_beta} and ~\Cref{table:sensitivity_tau}. We select which hyperparameter configuration to use based on normalized FL evaluation which is described in ~\Cref{fig:transfer_results}. We find that DSPN can work reasonably well even without tuning $\beta$ and $\tau$, but a hyperparameter search should be done for peak performance. Regarding the loss coefficients, we fix the $\lambda_1 = 0.25$ and $\lambda_2 = .01$ respectively across all datasets, where the range is chosen such that each loss value is commensurate with the other. This way, none of the loss/regularization terms solely drive the optimization. For $\lambda_3$ and $\lambda_ 4$ we decided the range to be ${0, 1e-5, 1e-3}$ and ${0, 5e-4, 5e-2}$. For Imagnette/Imagewoof/Imagenet-100 we did not see significant changes in performance by including non-zero $\lambda_3$ and $\lambda_4$. In fact, even for CIFAR-100 not having these does not imply DSPN will fail to produce good summaries, it only results in a marginal drop.

\begin{table}[h]
\centering
\begin{tabular}{c|c|c|c|c|c|c|c|c|c|c}
\toprule
           & k=10 & k=20 & k=30 & k=40 & k=50 & k=60 & k=70 & k=80 & k=90 & k=100 \\ \midrule
$\beta=0.01$ & 0.88 & 0.89 & 0.92 & 0.91 & 0.91 & 0.91 & 0.92 & 0.92 & 0.92 & 0.92 \\ \hline
$\beta=0.1$  & 0.88 & 0.88 & 0.90 & 0.90 & 0.90 & 0.92 & 0.91 & 0.92 & 0.92 & 0.92 \\ \hline
$\beta=0.5$  & 0.90 & 0.89 & 0.90 & 0.92 & 0.92 & 0.92 & 0.92 & 0.93 & 0.92 & 0.92 \\ \hline
\end{tabular}
\caption{Sensitivity of $\beta$ based on normalized FL evaluation.}
\label{table:sensitivity_beta}
\end{table}



\begin{table}[h]
\centering
\begin{tabular}{c|c|c|c|c|c|c|c|c|c|c}
\toprule
        & k=10 & k=20 & k=30 & k=40 & k=50 & k=60 & k=70 & k=80 & k=90 & k=100 \\ \midrule
$\tau=1$   & 0.84 & 0.84 & 0.85 & 0.87 & 0.88 & 0.88 & 0.88 & 0.89 & 0.90 & 0.90 \\ \hline
$\tau=5$   & 0.90 & 0.88 & 0.88 & 0.89 & 0.90 & 0.90 & 0.90 & 0.91 & 0.91 & 0.92 \\ \hline
$\tau=10$  & 0.90 & 0.89 & 0.90 & 0.92 & 0.92 & 0.92 & 0.92 & 0.93 & 0.92 & 0.92 \\ \hline
\end{tabular}
\caption{Sensitivity of $\tau$ based on normalized FL evaluation.}
\label{table:sensitivity_tau}
\end{table}

\begin{table}[h]
\centering
\begin{tabular}{c|c|c|c}
\toprule
k & DSPN & DSPN (No Redun) & Random \\ \midrule
10  & 0.91 & 0.88 & 0.65 $\pm$ 0.04 \\ \hline
20  & 0.93 & 0.89 & 0.67 $\pm$ 0.03 \\ \hline
30  & 0.91 & 0.89 & 0.67 $\pm$ 0.02 \\ \hline
40  & 0.90 & 0.88 & 0.68 $\pm$ 0.02 \\ \hline
50  & 0.90 & 0.88 & 0.68 $\pm$ 0.02 \\ \hline
60  & 0.89 & 0.88 & 0.68 $\pm$ 0.02 \\ \hline
70  & 0.88 & 0.88 & 0.68 $\pm$ 0.02 \\ \hline
80  & 0.88 & 0.87 & 0.69 $\pm$ 0.02 \\ \hline
90  & 0.88 & 0.87 & 0.69 $\pm$ 0.02 \\ \hline
100 & 0.88 & 0.87 & 0.70 $\pm$ 0.02 \\ \hline
\end{tabular}
\caption{Effect of setting $\lambda_3 = \lambda_4 = 0$ on normalized FL evaluations on CIFAR100}
\label{table:sensitivity_lambda}
\end{table}

\paragraph{Dataset Construction} We compute the passive $(E,M)$ pairs offline to improve computational efficiency. For the actively sampled pairs, we generate $(E,M)$ pairs every $K$ epochs where an epoch is defined as a full pass over all of the offline pairs. For all the datasets we consider, $K=15$.

\paragraph{Long-tailed Dataset Construction}
For the results presented in Sections~\ref{sec: exp_design}, we create pathologically imbalanced datasets from sets of images that the DSPN never encountered during training. Specifically, we add duplicates for each image in a class; the number of duplicates varies per class and is drawn from a Zipf distribution ~\citep{zipf2013psycho}. We show the final distributions of the imbalanced dataset in Figure~\ref{fig:app_dist}.

\begin{figure}[htbp]
  \centering
  
  \includegraphics[width=.49\linewidth]{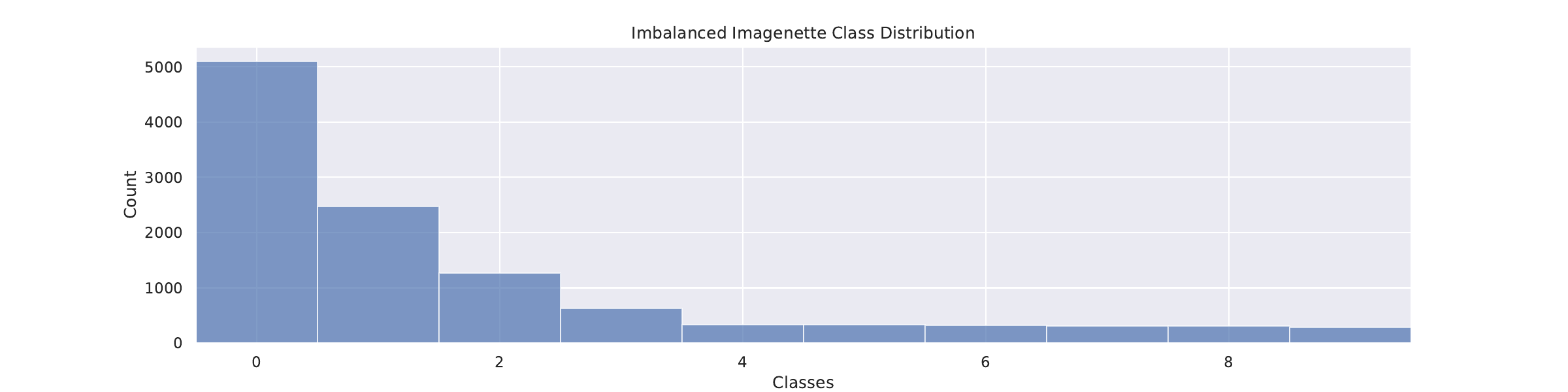}
  \includegraphics[width=.49\linewidth]{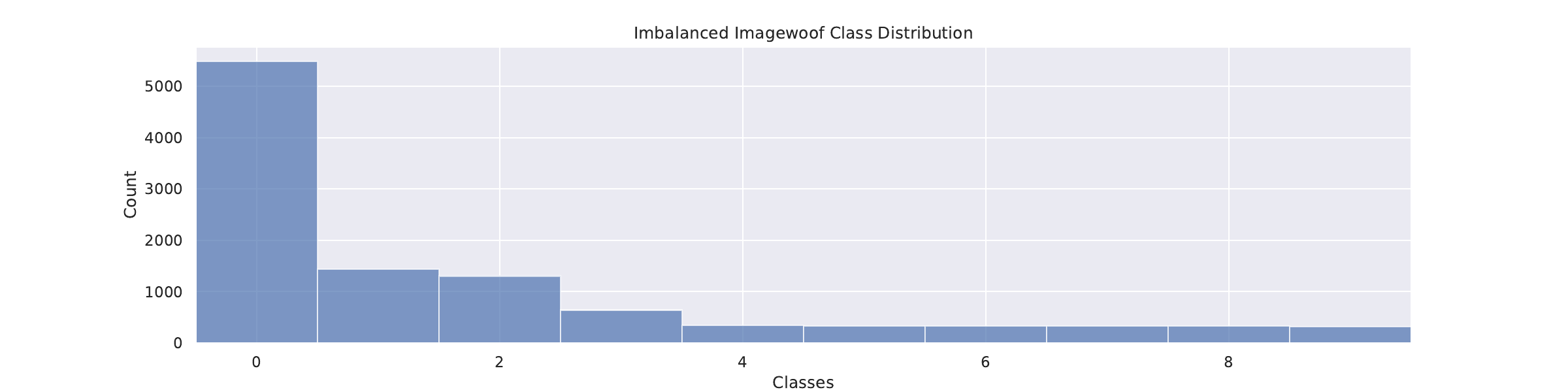}
  
  \vspace{5pt} 
  
  \includegraphics[width=.49\linewidth]{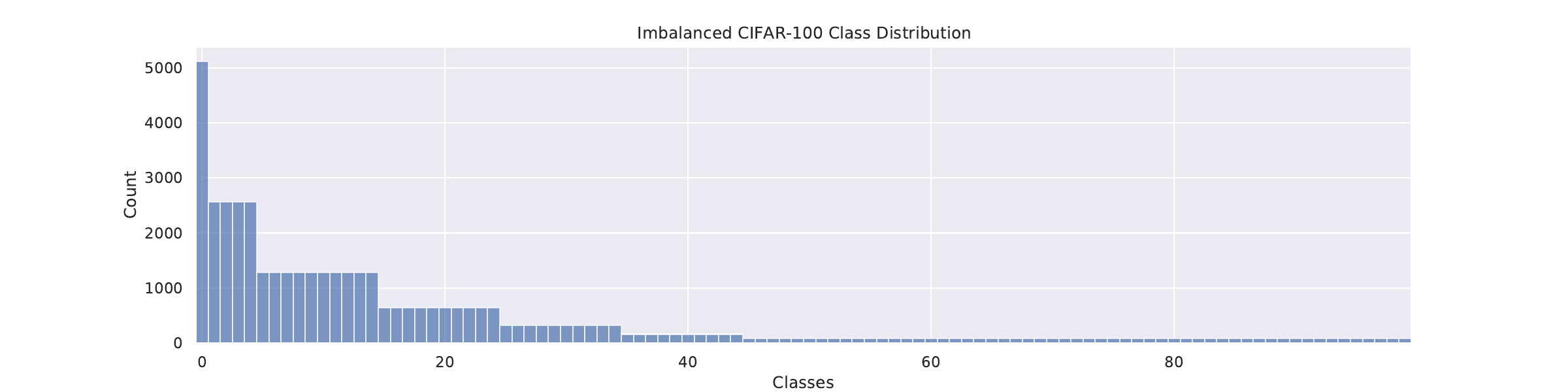}
  \includegraphics[width=.49\linewidth]{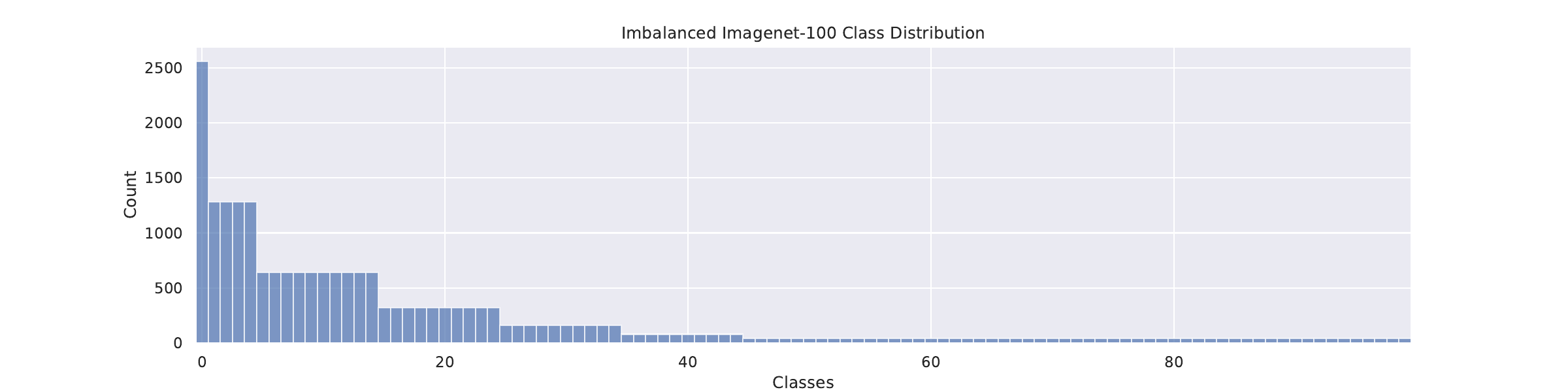}
  
  \caption{Imbalanced dataset distributions}
  \label{fig:app_dist}
\end{figure}


\section{Qualitative Analysis of Learnt Features}
\label{sec: app_Qualitative}


In Table~\ref{tab:ablation}, we show that imparting DSPNs the ability to learn features is crucial for successful training. Feature-based functions and DSFs work best when features correspond to some bag-of-words or count like attribute. For example, \cite{lin2012learning, liu2013submodular} both use TF-IDF based feature extractors to instantiate feature-based functions. More recently, \cite{lavania2019} learns  ``additively contributive" (AC) features with an autoencoder by restricting all weights in the initial layer of the decoder to the positive orthant. Unfortunately, universal features for images procured from modern pretrained models such as CLIP do not necessarily have this property. 

We show that DSPNs learn bag-of-words features in Figure~\ref{fig:app_qual}, by exploring the preferences of individual DSPN/CLIP features. For visualization purposes, we randomly select 10 features from the model in consideration and use each feature to rank the images from a completely held out set. The top 10 images selected by any of the DSPN features appear very homogeneous and are strongly correlated to class, while images with the lowest feature values (typically with feature value 0) are practically indistinguishable from a random subset of images. In other words, a high value indicates the \emph{presence} of some attribute while a low value indicates its \emph{absence}. When we perform a similar qualitative analysis upon CLIP features, we find that both the high valued sets and low valued sets have some weak correlation to class. CLIP features cannot be interpreted in the same way as DSPN because each feature corresponds to a coordinate in some higher dimensional space. Our results show that features that encapsulate count-like attributes are essential for learning a good DSF.

\begin{figure}[htbp]
  \centering
  \vspace{1in}
  \includegraphics[width=.95\linewidth]{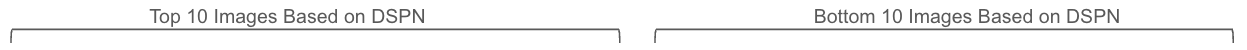}
  \includegraphics[width=.95\linewidth]{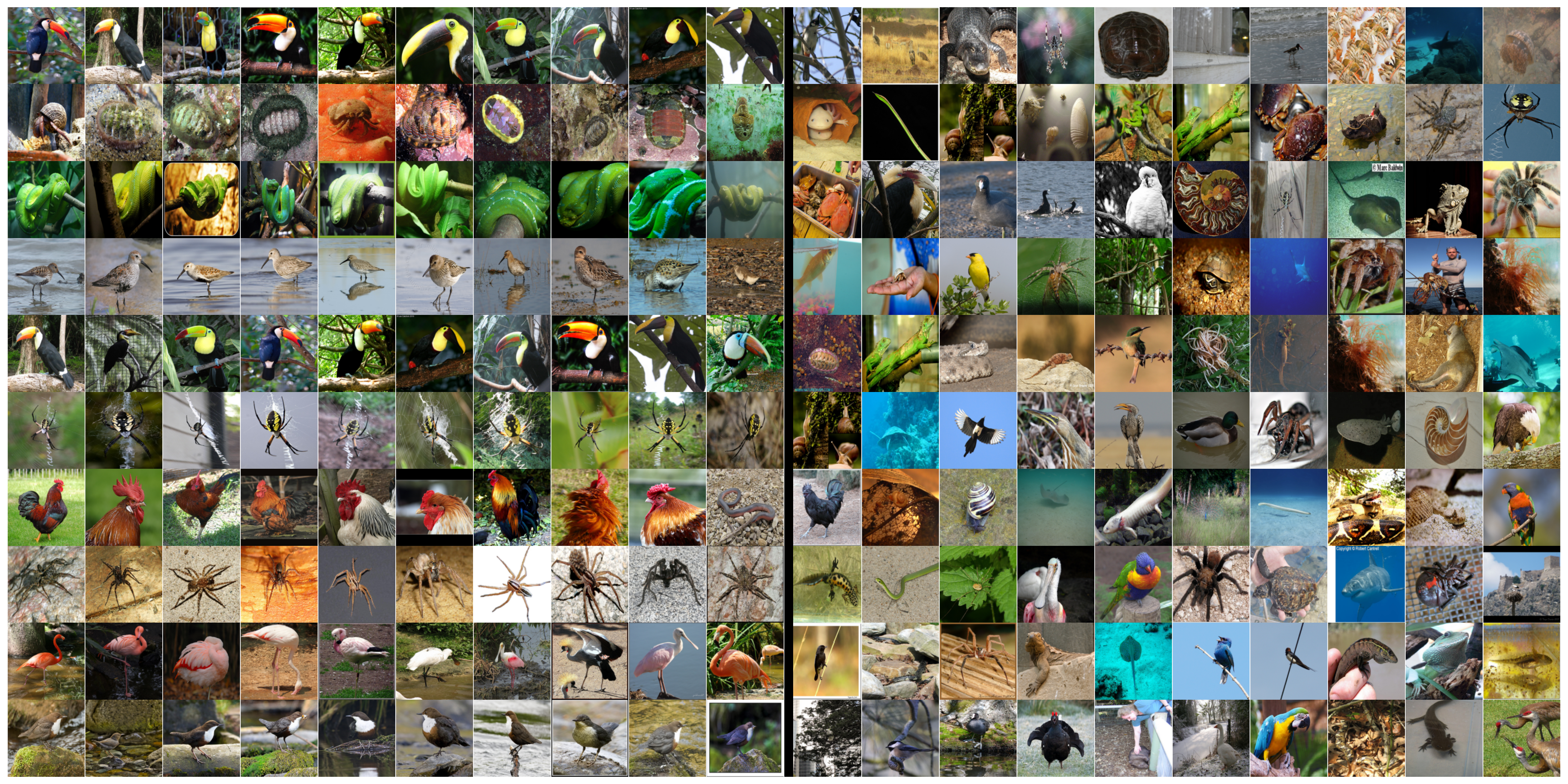}
  \includegraphics[width=.95\linewidth]{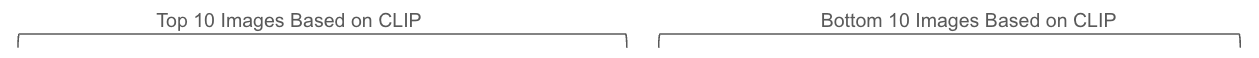}
  \includegraphics[width=.95\linewidth]{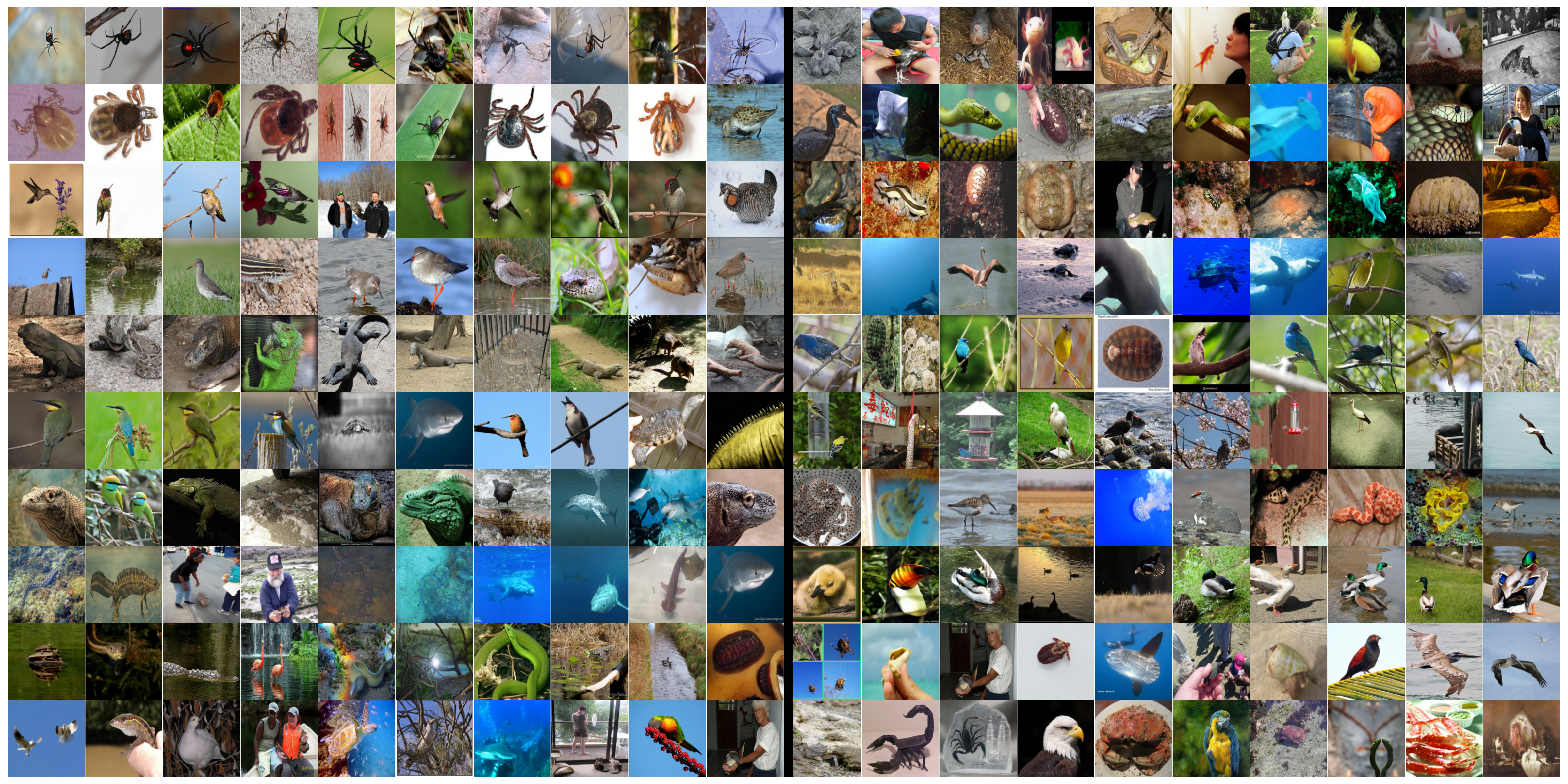}
  \includegraphics[width=.95\linewidth]{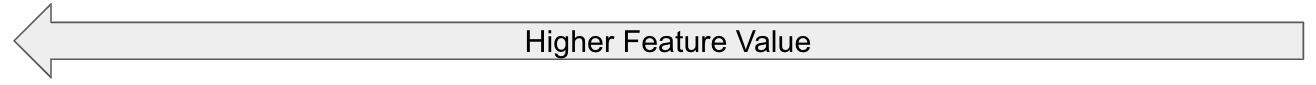}
  \caption{\small \textbf{{Qualitative Assessment of Learnt Features on Imagenet100.}} For each row, we use a randomly selected feature of either DSPN or CLIP to rank all images in the held out set and visualize the highest and lowest ranked samples. Each DSPN feature corresponds to some count-like property that is correlated to class, evident from the fact that very few distinct classes are present in the highly ranked images. }
   \label{fig:app_qual}
\end{figure}

\section{Limitations}
\label{appendix:limitations}
A limitation of DSPNs is that they may not be able to produce effective summaries that are significantly larger than the largest sets that they were trained on. In our experiments, we do observe that the learnt DSPN produces high quality summaries of size 500 though they were not trained on sets larger than 100. However, further investigation is needed to determine if DSPNs can continue to produce high quality summaries with significantly larger cardinalities.
\newpage
\section*{NeurIPS Paper Checklist}

\begin{enumerate}

\item {\bf Claims}
    \item[] Question: Do the main claims made in the abstract and introduction accurately reflect the paper's contributions and scope?
    \item[] Answer: \answerYes{} 
    \item[] Justification: Our claims from the abstract and introduction are justified throughout the paper. 

\item {\bf Limitations}
    \item[] Question: Does the paper discuss the limitations of the work performed by the authors?
    \item[] Answer: \answerYes{} 
    \item[] Justification: Limitations of our paper is discussed in \Cref{appendix:limitations}. 

\item {\bf Theory Assumptions and Proofs}
    \item[] Question: For each theoretical result, does the paper provide the full set of assumptions and a complete (and correct) proof?
    \item[] Answer: \answerYes{} 
    \item[] Justification: We do not make any assumptions in our theoretical results. 

    \item {\bf Experimental Result Reproducibility}
    \item[] Question: Does the paper fully disclose all the information needed to reproduce the main experimental results of the paper to the extent that it affects the main claims and/or conclusions of the paper (regardless of whether the code and data are provided or not)?
    \item[] Answer: \answerYes{} 
    \item[] Justification: We provide all the details required for reproducibility in \Cref{sec: experimental_details} and \Cref{table:loss_hyperparameters}. 

\item {\bf Open access to data and code}
    \item[] Question: Does the paper provide open access to the data and code, with sufficient instructions to faithfully reproduce the main experimental results, as described in supplemental material?
    \item[] Answer: \answerYes{}{} 
    \item[] Justification: We plan to release the codebase in a separate GitHub repository under the name \texttt{bhattg/DeepSubmodularPeripteralNetworks}. 

\item {\bf Experimental Setting/Details}
    \item[] Question: Does the paper specify all the training and test details (e.g., data splits, hyperparameters, how they were chosen, type of optimizer, etc.) necessary to understand the results?
    \item[] Answer: \answerYes{} 
    \item[] Justification: All the details are provided in \Cref{sec: experimental_details} and \Cref{table:loss_hyperparameters}.

\item {\bf Experiment Statistical Significance}
    \item[] Question: Does the paper report error bars suitably and correctly defined or other appropriate information about the statistical significance of the experiments?
    \item[] Answer: \answerYes{} 
    \item[] Justification: We provide standard error for the results of all experiments that are non-deterministic in~\cref{sec:Experiments}.  

\item {\bf Experiments Compute Resources}
    \item[] Question: For each experiment, does the paper provide sufficient information on the computer resources (type of compute workers, memory, time of execution) needed to reproduce the experiments?
    \item[] Answer: \answerYes{} 
    \item[] Justification: We provide the details in \Cref{sec:Experiments} and \Cref{table:loss_hyperparameters}. 
    
\item {\bf Code Of Ethics}
    \item[] Question: Does the research conducted in the paper conform, in every respect, with the NeurIPS Code of Ethics \url{https://neurips.cc/public/EthicsGuidelines}?
    \item[] Answer: \answerYes{} 
    \item[] Justification: We reviewed NeurIPS Code of Ethics and abide by it in our work. 

\item {\bf Broader Impacts}
    \item[] Question: Does the paper discuss both potential positive societal impacts and negative societal impacts of the work performed?
    \item[] Answer: \answerYes{} 
    \item[] Justification: Our work proposes a new approach to learn submodular functions from an oracle. We do not anticipate any negative social impact of our work. 

\item {\bf Safeguards}
    \item[] Question: Does the paper describe safeguards that have been put in place for responsible release of data or models that have a high risk for misuse (e.g., pretrained language models, image generators, or scraped datasets)?
    \item[] Answer: \answerNA{} 
    \item[] Justification: We do not release any dataset or model. 

\item {\bf Licenses for existing assets}
    \item[] Question: Are the creators or original owners of assets (e.g., code, data, models), used in the paper, properly credited and are the license and terms of use explicitly mentioned and properly respected?
    \item[] Answer: \answerYes{} 
    \item[] Justification: All datasets we use are cited.

\item {\bf New Assets}
    \item[] Question: Are new assets introduced in the paper well documented and is the documentation provided alongside the assets?
    \item[] Answer: \answerNA{} 
    \item[] Justification: We do not release any new assets in this work. 

\item {\bf Crowdsourcing and Research with Human Subjects}
    \item[] Question: For crowdsourcing experiments and research with human subjects, does the paper include the full text of instructions given to participants and screenshots, if applicable, as well as details about compensation (if any)? 
    \item[] Answer: \answerNA{} 
    \item[] Justification: We do not have any experiments with human subjects. 

\item {\bf Institutional Review Board (IRB) Approvals or Equivalent for Research with Human Subjects}
    \item[] Question: Does the paper describe potential risks incurred by study participants, whether such risks were disclosed to the subjects, and whether Institutional Review Board (IRB) approvals (or an equivalent approval/review based on the requirements of your country or institution) were obtained?
    \item[] Answer: \answerNA{} 
    \item[] Justification: No human subjects were used on our work.

\end{enumerate}


\end{document}